\def\R{{\mathbb{R}}}
\def\pr{{\rm Pr}}
\def\H{{\mathcal H}}
\def\D{{\mathcal D}}
\def\dG{\overleftrightarrow{G}}
\def\uG{\overline{G}}
\newtheorem{thm}{Theorem}
\newtheorem{cor}[thm]{Corollary}
\newtheorem{defn}[thm]{Definition}
\newtheorem{open}{Open problem}
\title[What relations are reliably embeddable in Euclidean space?]{What relations are reliably embeddable in Euclidean space?}
\begin{document}

\maketitle

\begin{abstract}%
  We consider the problem of embedding a relation, represented as a directed graph, into Euclidean space. For three types of embeddings motivated by the recent literature on knowledge graphs, we obtain characterizations of which relations they are able to capture, as well as bounds on the minimal dimensionality and precision needed.
\end{abstract}

\begin{keywords}%
  embeddings; knowledge graphs.
\end{keywords}

\section{Introduction}

The problem of \textit{embedding graphs in Euclidean space} has arisen in a variety of contexts over the past few decades. Most recently, it has been used for making symbolic knowledge available to neural nets, to help with basic reasoning tasks~\citep{NMTG16}. This knowledge consists of relations expressed in tuples, like ({\tt Tokyo}, {\tt is-capital-of}, {\tt Japan}). Alternatively, each relation (like {\tt is-capital-of}) can be thought of as a directed graph whose nodes correspond to {\it entities} (such as cities and countries).

A wide array of methods have been proposed for embedding such relations in vector spaces~\citep{PH01,KTGYU06,STS09,BWCB11,NTK11,AUGWY13,SCMN13,NK17}. For instance, {\it translational embeddings}~\citep{AUGWY13} map each entity $x$ to a vector $\phi(x) \in \R^d$ and each relation $r$ to a vector $\Psi(r) \in \R^d$. The intention is that for any entities $x,y$ and any relation $r$,
$$ \mbox{relation $(x,r,y)$ holds} \ \ \Longleftrightarrow \ \ \phi(x) + \Psi(r) \approx \phi(y).$$
This is motivated in part by the success of {\it word embeddings}~\citep{MSCCD13}, which embed words in Euclidean space so that words with similar co-occurrence statistics lie close to one another. It has been observed that these embeddings happen to obey linear relationships of the type above for certain relations and entities, making it possible, for instance, to use them for simple analogical reasoning~\citep{MYZ13}. Rather than relying upon these haphazard coincidences, it makes sense to explicitly embed relations of interest so that this property is assured.

An alternative scheme, {\it structured embeddings}~\citep{BWCB11}, again assigns each entity $x$ a vector $\phi(x) \in \R^d$, but assigns each relation $r$ a pair of $d \times d$ matrices, $L_r$ and $R_r$, so that
$$ \mbox{relation $(x,r,y)$ holds} \ \ \Longleftrightarrow \ \ L_r \phi(x) \approx R_r \phi(y).$$
Notice that this is more general than translational embeddings because $L_r$ can capture any affine transformation by adding a constant-valued feature to $\phi$.

Another example of an embedding method is {\it bilinear embedding}~\citep{NTK11}, in which each entity $x$ gets a vector $\phi(x) \in \R^d$ and each relation $r$ gets a matrix $A_r$, so that
$$ \mbox{relation $(x,r,y)$ holds} \ \ \Longleftrightarrow \ \ \phi(x)^TA_r \; \phi(y) \ \geq \ \mbox{some threshold}.$$

These three embedding methods---translational, structured, and bilinear---broadly represent the various schemes that have been proposed in the recent machine learning literature, and many other suggestions are variants of these. For instance, {\it linear relational embedding}~\citep{PH01} assigns each entity $x$ a vector $\phi(x)$ and each relation $r$ a matrix $M_r$ so that $(x,r,y) \Longleftrightarrow \phi(y) \approx M_r \phi(x)$: a special case of structured embedding.

Typically the parameters of the embeddings (the mapping $\phi$ as well as the vectors and matrices for each relation) are fit to a given list of relation triples, using some suitable loss function. They can then be used for simple reasoning tasks, such as link prediction.

In this paper, we take a formal approach to this whole enterprise.
\begin{enumerate}
\item What kinds of relations can be embedded using these methods? Can arbitrary relations be accurately represented?
\item What dimensionality is needed for these embeddings?
\item What precision is needed for these embeddings? This question turns out to play a central role.
\end{enumerate}
In particular, we will think of a relation as being reliably embeddable if it admits an embedding that does not require too much precision or too high a dimension. We wish to gauge what kinds of relations have this property.

In order to answer these questions, it is enough to look at a single relation at a time. We therefore look at the problem of {\it embedding a given directed graph in Euclidean space}.

\subsection{Related work}

There is a substantial literature on embedding {\it undirected} graphs into Euclidean space. A key result is the following.
\begin{thm}[\cite{M84}]
For any undirected graph $G = (V,E)$, there is a mapping $\phi: V \rightarrow \R^d$ such that $\{u,v\} \in E \Longleftrightarrow \|\phi(u) - \phi(v)\| \leq 1$. Here $d \leq |V|$.
\label{thm:sphericity}
\end{thm}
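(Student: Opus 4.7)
The plan is to realize the required distance relationships by first constructing a suitable Gram matrix and then extracting the embedding from its factorization. I would start by normalizing to the unit sphere: insist that $\|\phi(v)\|^2 = 1$ for every $v \in V$. Then $\|\phi(u) - \phi(v)\|^2 = 2 - 2\langle \phi(u), \phi(v)\rangle$, so the target equivalence $\{u,v\} \in E \Longleftrightarrow \|\phi(u) - \phi(v)\| \le 1$ is recast as the dot-product condition $\{u,v\} \in E \Longleftrightarrow \langle \phi(u), \phi(v)\rangle \ge 1/2$.

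Next I would write down an explicit candidate Gram matrix $M \in \R^{|V|\times|V|}$, parameterized by a small $\delta > 0$: set $M_{vv} = 1$, $M_{uv} = 1/2$ whenever $\{u,v\} \in E$, and $M_{uv} = (1-\delta)/2$ whenever $u \ne v$ and $\{u,v\} \notin E$. Letting $A$ be the adjacency matrix of $G$ and $J = \mathbf{1}\mathbf{1}^T$ the all-ones matrix, this can be rewritten compactly as
$$ M \ = \ \tfrac{1+\delta}{2}\,I \ + \ \tfrac{\delta}{2}\,A \ + \ \tfrac{1-\delta}{2}\,J. $$
Granting that $M \succeq 0$, a spectral factorization $M = \Phi^T\Phi$ produces vectors $\phi(v) \in \R^d$ (the columns of $\Phi$) with $d = \mathrm{rank}(M) \le |V|$. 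By construction, the squared distance is exactly $1$ across edges and exactly $1+\delta > 1$ across non-edges, which is precisely the required equivalence.

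The main obstacle is thus to verify that $M$ is positive semidefinite for an appropriate choice of $\delta$. Since $J \succeq 0$ and Weyl's inequality gives $\lambda_{\min}(P+Q) \ge \lambda_{\min}(P) + \lambda_{\min}(Q)$ for symmetric $P, Q$, it suffices to ensure that $\tfrac{1+\delta}{2} + \tfrac{\delta}{2}\,\lambda_{\min}(A) \ge 0$. If $\lambda_{\min}(A) \ge -1$ this holds for every $\delta > 0$; otherwise it holds whenever $\delta \le 1/(|\lambda_{\min}(A)| - 1)$. Because $A$ is a fixed finite matrix, a valid $\delta > 0$ always exists, and the theorem follows with $d \le |V|$.
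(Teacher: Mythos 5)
Your argument is correct. The paper does not prove this statement---it simply cites \citet{M84}---but your construction is essentially the standard one, and in fact matches the Frankl--Maehara-type embedding the paper invokes in the proof of Theorem~\ref{thm:translational-positive}: there the Gram matrix is $\Delta I + A$ (unit dot product across edges, zero across non-edges), whereas yours is the affinely shifted version $\tfrac{1+\delta}{2}I+\tfrac{\delta}{2}A+\tfrac{1-\delta}{2}J$; both are diagonal-loading arguments that make the matrix positive semidefinite by taking the perturbation of $I$ small relative to $|\lambda_{\min}(A)|$. The only cosmetic points worth noting are that you implicitly need $\delta\leq 1$ so that the $J$ term is positive semidefinite, and that (as in the cited theorem itself) the equivalence should be read for distinct $u,v$, since $\|\phi(u)-\phi(u)\|=0\leq 1$ while $\{u,u\}\notin E$.
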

We will call this an \textit{undirected distance embedding} to avoid confusion with embeddings of {\it directed} graphs, our main focus. The {\it sphericity} of an undirected graph $G$ is defined as the smallest dimension $d$ for which such an embedding exists; computing it was proved NP-hard by \citet{KM12}. The same paper showed an even more troubling result, that embeddings achieving this minimum dimension sometimes require precision (number of bits per coordinate) {\it exponential} in $|V|$. This has been a key consideration in our formulation of robustness.

An embedding of an undirected graph can also be based on dot products rather than Euclidean distance. We call these \textit{undirected similarity embeddings}. The following is known.
\begin{thm}[\cite{RRS89}]
For any undirected graph $G = (V,E)$, there is a mapping $\phi: V \rightarrow \R^d$ such that $\{u,v\} \in E \Longleftrightarrow \phi(u) \cdot \phi(v) \geq 1$. Here $d \leq |V|$.
\label{thm:sphericity-dot-product}
\end{thm}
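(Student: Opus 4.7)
The plan is to realize the desired inner products as the entries of a positive semidefinite Gram matrix obtained from the adjacency matrix by a suitable diagonal shift. Let $n = |V|$, let $A$ be the $n \times n$ adjacency matrix of $G$ (with zero diagonal), and let $I$ be the $n \times n$ identity. I will define
\[
M \;=\; A \;+\; c\,I \;+\; \mathbf{1}\mathbf{1}^T,
\]
where $\mathbf{1}$ is the all-ones vector and $c$ is a positive constant chosen momentarily. The purpose of the $\mathbf{1}\mathbf{1}^T$ term is to shift all off-diagonal entries by exactly $1$, so that after dividing through by an appropriate scalar we get entry value $1$ on edges and strictly less than $1$ on non-edges; the purpose of $c\,I$ is to guarantee positive semidefiniteness.

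Concretely, observe that the off-diagonal entries of $M$ equal $2$ on edges and $1$ on non-edges. Now choose $c$ large enough that $M$ is positive semidefinite; since $\|A\|_2 \le n-1$ and $\mathbf{1}\mathbf{1}^T$ is itself PSD, any $c \ge n-1$ suffices. Then factor $M = \Phi^T\Phi$ where $\Phi$ is an $r\times n$ matrix with $r = \mathrm{rank}(M) \le n$, and let $\phi(v) \in \R^r$ be the column of $\Phi$ indexed by $v$. Finally, define the actual embedding to be $\psi(v) = \phi(v)/\sqrt{3/2}$, chosen so that the threshold value $1$ lies strictly between the edge inner product and the non-edge inner product.

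After rescaling, the inner product $\psi(u)\cdot\psi(v)$ equals $2/(3/2) = 4/3 > 1$ for every edge $\{u,v\}$ and equals $1/(3/2) = 2/3 < 1$ for every non-edge $\{u,v\}$ with $u \ne v$; the case $u=v$ is irrelevant since a simple undirected graph has no self-loops. This establishes the required biconditional. The dimension is $r \le n = |V|$ as claimed.

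The only genuinely nontrivial step is verifying PSDness of the shift, which is immediate from the operator norm bound on $A$. Everything else is bookkeeping: choose the additive constants so that the target Gram values straddle the threshold $1$, then extract $\phi$ as columns of a Cholesky-type factor. The whole argument is essentially a translation of Theorem~\ref{thm:sphericity} from the distance setting to the inner-product setting, exploiting that a PSD matrix of size $n$ always factors through $\R^n$.
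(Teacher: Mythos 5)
Your argument is correct. The paper itself gives no proof of this theorem---it is quoted from \cite{RRS89} as background---so there is nothing to diverge from; your diagonal-shift-plus-Gram-factorization argument is the standard one and all the details check out: $M = A + cI + \mathbf{1}\mathbf{1}^T$ with $c \ge n-1$ is PSD (spectral radius of $A$ is at most $n-1$, and $\mathbf{1}\mathbf{1}^T \succeq 0$), its off-diagonal entries are $2$ on edges and $1$ on non-edges, and rescaling by $1/\sqrt{3/2}$ puts the threshold $1$ strictly between $4/3$ and $2/3$. Your dismissal of the $u=v$ case is consistent with the paper's conventions (it explicitly ignores self-pairs, and Theorem~\ref{thm:sphericity} would fail for $u=v$ under the literal reading anyway). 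It is worth noting the contrast with the paper's own Theorem~\ref{thm:distance-embedding-existence} for \emph{directed} graphs: there one has two maps $\phi_{out},\phi_{in}$, so the adjacency matrix can be factored directly via its SVD with no PSD correction; in the undirected single-map setting the matrix being factored must be an honest Gram matrix, which is exactly why your $cI$ shift is needed and why the resulting dimension bound is $n$ rather than $\mathrm{rank}(A)$.
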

The minimum dimension $d$ needed for an undirected similarity embedding is at most the sphericity of $G$, but can be much smaller. A complete binary tree on $n$ nodes, for instance, has sphericity $\Omega(\log n)$ but can be embedded in $\R^3$ using a dot-product embedding~\citep{RRS89}.

The present paper is about embeddings of {\it directed} graphs, and many of the results we obtain are qualitatively different from the undirected case. We also diverge from earlier theory work by giving {\it precision} a central role in the analysis, via a suitable notion of robustness.

Another body of work, popular in theoretical computer science, has looked at {\it embeddings of distance metrics} into Euclidean space~\citep{LLR95}. Here a metric on finitely many points is specified by an undirected graph with edge lengths, where the distance between two nodes is the length of the shortest path between them. The idea is to find an embedding of the nodes into Euclidean space that preserves all distances. For many graphs, an embedding of this type is not possible: for constant-degree expander graphs, for instance, a multiplicative distortion of $\Omega(\log n)$ is inevitable, where $n$ is the number of nodes. The problem we are considering differs in two critical respects: first, we only need to preserve immediate neighborhoods, and second, we are dealing with directed graphs.

The machine learning literature has proposed many methods for embedding, such as those mentioned above, along with empirical evaluation. There has also been work on embeddings into non-Euclidean spaces: complex-valued~\citep{TWRGB16} or hyperbolic~\citep{NK17}. In this paper, we focus on the Euclidean case.

\subsection{Overview of results}

Let $G = (V,E)$ be a directed graph representing a relation we wish to embed. Here $V$ is the set of entities, and an edge $(u,v)$ means that the relation holds from $u$ to $v$.

We begin by considering a formalization of {\it translational embeddings}. We find  that only a limited class of relations can be embedded this way: a directed cycle does not have a translational embedding (Theorem~\ref{thm:translational-negative}), but any directed acyclic graph does (Theorem~\ref{thm:translational-positive}).

Next, we consider more powerful classes of embeddings: abstractions of the structured and bilinear embeddings mentioned above, that we call {\it distance embeddings} and {\it similarity embeddings}, respectively. We find, first, that all directed graphs admit both types of embeddings (Theorem~\ref{thm:distance-embedding-existence}). Moreover, the minimum dimension achievable in the two cases differs by at most 1 (Theorem~\ref{thm:relating-dimensions}), and is closely related to the sign rank of the adjacency matrix of the graph (Theorem~\ref{thm:sign-rank}). We present several examples of embeddings for canonical types of graphs: paths, cycles, trees, and so on.

We also explicitly focus on the precision of embeddings, which has not been a feature of the earlier theory work on undirected graphs. In particular, we introduce a notion of $\delta$-robustness, where larger values of $\delta$ correspond to more robust embeddings. We relate this directly to precision by showing that any graph that admits a $\delta$-robust embedding also has distance and similarity embeddings into the $O((1/\delta^2) \log n)$-dimensional Boolean hypercube (Theorem~\ref{thm:hamming}). In this way, the $\delta$ parameter translates directly into an upper bound on the number of bits needed. We look at the robustness achievable on different families of graphs. We find, for instance, that for any graph of maximum degree $D$, robustness $\delta \geq 1/D$ can be attained (Theorem~\ref{thm:distance-embedding-existence}). On the other hand, random dense graphs are not robustly embeddable (Corollary~\ref{cor:random-graphs}).

Our analysis of embeddings focuses on two parameters: dimension and robustness. We show that the former is NP-hard to minimize (Appendix B), while the latter can be maximized efficiently by semidefinite programming (Section~\ref{sec:algorithms}). Thus robustness is a promising optimization criterion for designing embeddings.

\section{Translational embeddings}

\begin{defn}
A \emph{translational embedding} of a directed graph $G = (V,E)$ is given by a mapping $\phi: V \rightarrow \R^d$, a unit vector $z \in \R^d$, and thresholds $\{t_u \geq 0: u \in V\}$, such that for all $u \neq v$,
$$ (u,v) \in E \ \Longleftrightarrow \ \|\phi(v) - (\phi(u) + z)\| \leq t_u .$$
If all the thresholds $t_u$ are identical, then we call it a \emph{uniform} translational embedding.
\end{defn}
Note that (i) the requirement that $z$ be a unit vector is without loss of generality, and (ii) we avoid checking self-edges in order to sidestep various complications. Paraphrasing, the definition imposes an ordinal constraint: if $(u,v) \in E$ but $(u,w) \not\in E$, then $\phi(v)$ must lie closer to $\phi(u) + z$ than does $\phi(w)$.

For instance, let $P_n$ denote the directed path $1 \rightarrow 2 \rightarrow \cdots \rightarrow n$. A uniform translational embedding in $\R$ is given by $\phi(k) = k$, $z = 1$, and any $0 < t < 1$. 

As another example, consider the directed complete bipartite graph containing all edges from node set $V_1$ to complementary node set $V_2$. A uniform translational embedding to $\R$ is again available: map
$$ \phi(u) 
=
\left\{
\begin{array}{ll}
0 & \mbox{for $u \in V_1$} \\
1 & \mbox{for $u \in V_2$}
\end{array}
\right.
$$
with $z = 1$ and any $0 < t < 1$.

It is of interest to determine what kinds of graphs can be embedded translationally. We begin with a negative result. 
\begin{thm}
$C_n$, the directed cycle on $n$ nodes, does not admit a translational embedding for any $n \geq 3$.
\label{thm:translational-negative}
\end{thm}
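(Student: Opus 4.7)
My plan is to exploit the tension between two facts: the cycle forces the edge-error vectors to sum to a specific vector of norm $n$ (so they cannot all be small), while the non-edge from each vertex back to its \emph{predecessor} forces each individual error to be small in a precise squared sense. Summing the squared non-edge inequalities will cancel exactly against the telescoping identity, producing a contradiction.

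First, I set up notation: assume for contradiction that $C_n$ (vertices $1,\dots,n$ with indices mod $n$, edges $i\to i+1$) admits a translational embedding $(\phi, z, \{t_i\})$. For each edge $i\to i+1$ define the error vector $\epsilon_i := \phi(i+1) - \phi(i) - z$, so the edge condition reads $\|\epsilon_i\| \leq t_i$. Telescoping around the cycle gives
$$\sum_{i=1}^{n} \epsilon_i \;=\; \sum_{i=1}^{n}\bigl(\phi(i+1) - \phi(i)\bigr) - n z \;=\; -n z.$$
Next, I invoke the non-edge $(i, i-1)$, which exists for every $n \geq 3$ since $i-1 \not\equiv i+1 \pmod n$. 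The non-edge condition combined with $t_i \geq \|\epsilon_i\|$ gives
$$\|\phi(i-1) - \phi(i) - z\| \;>\; t_i \;\geq\; \|\epsilon_i\|,$$
and a direct computation yields $\phi(i-1) - \phi(i) - z = -2z - \epsilon_{i-1}$. Squaring the resulting strict inequality produces $\|2z + \epsilon_{i-1}\|^2 > \|\epsilon_i\|^2$ for each $i$.

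The final step is to sum over $i=1,\dots,n$. Expanding $\|2z + \epsilon_{i-1}\|^2 = 4 + 4\langle z, \epsilon_{i-1}\rangle + \|\epsilon_{i-1}\|^2$ and using $\sum_i \epsilon_{i-1} = \sum_i \epsilon_i = -nz$ together with $\|z\|=1$, the left-hand side collapses to
$$4n + 4\langle z, -nz\rangle + \sum_i \|\epsilon_i\|^2 \;=\; \sum_i \|\epsilon_i\|^2,$$
and the strict inequality becomes $\sum_i \|\epsilon_i\|^2 > \sum_i \|\epsilon_i\|^2$, the desired contradiction.

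The main obstacle will be identifying the right non-edge to exploit. The weaker consequence $\sum_i t_i \geq n$, obtained by applying the triangle inequality to $\sum_i \epsilon_i = -nz$, is not itself contradictory with anything, so one must find a non-edge constraint that pairs cleanly with the telescoping identity. The predecessor non-edge $(i, i-1)$ is the natural choice precisely because its ``displacement'' vector involves a single $\epsilon_{i-1}$, so squaring and summing produces the exact cancellation $4n - 4n = 0$ without stray cross-terms between different errors.
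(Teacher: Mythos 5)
Your proof is correct and uses essentially the same argument as the paper: both combine each edge constraint with the corresponding reverse non-edge constraint and sum around the cycle so that everything telescopes to the contradiction $0>0$. The only (cosmetic) difference is that you eliminate the thresholds first by pairing the two constraints sharing $t_i$ and let the quadratic terms cancel after summing, whereas the paper subtracts the two squared inequalities sharing the displacement $\phi(y)-\phi(x)$ to get a linear inequality per edge and lets the thresholds telescope.
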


\begin{proof}
Assume that $u_1 \rightarrow u_2 \rightarrow \cdots \rightarrow u_n \rightarrow u_1$ has a translational embedding $(\phi, z, \{t_u\})$; we will arrive at a contradiction.

First, for any edge $x \rightarrow y$, the conditions for $(x,y) \in E$ and $(y,x) \not\in E$ are, respectively,
\begin{align*}
\| \phi(y) - (\phi(x) + z)\|^2 &\leq t_x^2 \\
\| \phi(x) - (\phi(y) + z)\|^2 &> t_y^2
\end{align*}
which can be rewritten
\begin{align*}
\| (\phi(y) - \phi(x)) - z\|^2 &\leq t_x^2 \\
\| (\phi(y) - \phi(x)) + z\|^2 &> t_y^2 .
\end{align*}
The left-hand sides have $\|\phi(y) - \phi(x)\|^2$ and $\|z\|^2$ in common. Subtracting, we get
$$ z \cdot (\phi(y) - \phi(x)) > \frac{1}{4} (t_y^2 - t_x^2) .$$

Now we can apply this to the $n$ edges of the cycle to yield the system of inequalities
\begin{align*}
z \cdot (\phi(u_2) - \phi(u_1)) &> \frac{1}{4} (t_{u_2}^2 - t_{u_1}^2) \\
z \cdot (\phi(u_3) - \phi(u_2)) &> \frac{1}{4} (t_{u_3}^2 - t_{u_2}^2) \\
&\vdots \\
z \cdot (\phi(u_1) - \phi(u_n)) &> \frac{1}{4} (t_{u_1}^2 - t_{u_n}^2) 
\end{align*}
The left-hand sides add up to zero, as do the right-hand sides, a contradiction.
\end{proof}

On the other hand, any directed {\it acyclic} graph can be translationally embedded.
\begin{thm}
Suppose directed graph $G = (V,E)$ is acyclic. Then $G$ admits a uniform translational embedding.
\label{thm:translational-positive}
\end{thm}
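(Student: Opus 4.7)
The plan is to reduce the directed embedding problem to an undirected one by exploiting a topological ordering. Fix a topological order $v_1 \prec v_2 \prec \cdots \prec v_n$ so that every edge $(v_i, v_j) \in E$ satisfies $i < j$, and let $H$ be the undirected graph on $\{1, \ldots, n\}$ whose edges are precisely the pairs $\{i, j\}$ with $i < j$ and $(v_i, v_j) \in E$. Applying Maehara's theorem (Theorem~\ref{thm:sphericity}) to $H$ yields an undirected distance embedding $F \colon V(H) \to \R^{d_0}$ with $d_0 \leq n$ and a strictly positive gap between edge and non-edge distances.

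I would then assemble the translational embedding of $G$ by taking
\[
\phi(v_i) \;=\; (M\cdot i,\; F_i) \;\in\; \R^{d_0+1}, \qquad z \;=\; (1, 0, \ldots, 0),
\]
for a scalar $M > 0$ to be chosen. A direct calculation gives
\[
\|\phi(v_j) - \phi(v_i) - z\|^2 \;=\; (M(j-i) - 1)^2 \;+\; \|F_j - F_i\|^2,
\]
so that the first term carries out the DAG's orientation work---for $i > j$ it is at least $(1+M)^2 > 1$, which (together with the threshold) rules out edges in the reverse direction---while for forward pairs ($i < j$) it stays below $1$, leaving the Maehara fingerprint distance to discriminate edges from non-edges.

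The step I expect to be the main obstacle is parameter tuning, particularly at the reverse of a true DAG edge. When $(v_j, v_i) \in E$ with $j < i$, the pair $\{i, j\}$ is an edge of $H$, so $F_i$ and $F_j$ are placed close by Maehara; yet we still need $\|\phi(v_j) - \phi(v_i) - z\|^2 > t^2$ because $(v_i, v_j) \notin E$. This forces $M$ to be bounded below so that the $(1 + M(i-j))^2$ gap clears the threshold on its own, while $M$ must simultaneously be bounded above so that forward edges of large topological span keep $(M(j-i) - 1)^2$ below the threshold. Matching these bounds pins down the required relationship between $M$, the overall threshold $t$, and the Maehara gap; depending on the graph, it may be necessary first to enlarge the gap (for example by adjoining extra orthogonal coordinates that separate non-edge pairs of $H$, or by collapsing vertices of $H$ that can share a common fingerprint without spoiling any edge relation of $G$). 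Once the parameters lie in a compatible regime, verifying the four regimes---forward edge, forward non-edge, reverse of an edge, reverse of a non-edge---is a routine quadratic-inequality calculation.
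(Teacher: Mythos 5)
Your architecture is exactly the paper's: topologically order the vertices, put a monotone index coordinate in one direction (with $z$ along it) to kill reverse edges, and use an undirected distance embedding of the underlying graph in the orthogonal coordinates to separate forward edges from forward non-edges. But the step you flag as ``the main obstacle'' is where the real content lies, and you do not close it. With Maehara's theorem as stated (Theorem~\ref{thm:sphericity}), edges satisfy $\|F_i-F_j\|\le 1$ and non-edges $\|F_i-F_j\|>1$, but the gap $\gamma = \min_{\text{non-edge}}\|F_i-F_j\|^2 - 1$ is not controlled: it can be arbitrarily small relative to the scale of the embedding. Your forward-edge bound is roughly $(1-M)^2 + (\text{edge term} \le 1)$ while your forward-non-edge bound is roughly $0 + (1+\gamma)$ (a non-edge may have span close to $1/M$, making its index term vanish, while a span-one edge has index term $(1-M)^2$). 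Separating these forces $(1-M)^2 < \gamma$, hence $M$ near $1$; but then an edge of topological span $n-1$ has index term $(M(n-1)-1)^2 = \Theta(n^2)$ and blows past any threshold. Rescaling $F$ only shrinks $\gamma$ proportionally and does not escape this bind: combining your own two constraints gives a requirement like $(1-M)^2 < 4M\gamma$, which with $M = O(1/n)$ demands $\gamma = \Omega(n)$ --- far beyond what Theorem~\ref{thm:sphericity} provides.

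The missing ingredient is an undirected embedding with a \emph{uniform, graph-independent} gap. The paper invokes the Frankl--Maehara construction, which places the vertices of the undirected graph so that \emph{every} edge has squared distance exactly $2(\Delta-1)$ and \emph{every} non-edge exactly $2\Delta$, where $\Delta$ is (at most) the maximum degree. The absolute gap of $2$ in squared distance then comfortably absorbs the index perturbation: taking increments $i/(n-1)$, the first-coordinate term lies in $[0,1]$ for all forward pairs and is at least $(1+1/(n-1))^2 > 1$ for all reverse pairs, so the single threshold $t^2 = 2\Delta-1$ separates all four regimes at once. Your parenthetical suggestion to ``enlarge the gap by adjoining extra orthogonal coordinates'' is pointing in the right direction --- the Frankl--Maehara embedding is essentially an indicator-vector construction of this kind --- but as written it is a gesture, not a proof, and it is precisely the step on which the theorem turns.
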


\begin{proof}
By topologically ordering $G$, assume without loss of generality that $V = \{1,2,\ldots, n\}$ and that all edges $(i,j) \in E$ have $i < j$. Let $G' = (V,E')$ denote the undirected version of $G$, with an edge $\{i,j\} \in E'$ for every $(i,j) \in E$. By applying a result of \citet{FM88}, we obtain an embedding $
\psi: V \rightarrow \R^d$ of $G'$ with the following characteristics:
\begin{itemize}
    \item $\|\psi(i)\|^2 = \Delta$, where $\Delta \geq 1$ is at most the maximum degree of $G'$.
    \item If $\{i,j\} \in E'$ then $\|\psi(i) - \psi(j)\|^2 = 2(\Delta-1)$.
    \item If $\{i,j\} \not\in E'$ then $\|\psi(i) - \psi(j)\|^2 = 2 \Delta$.
\end{itemize}

We then define a uniform translational embedding of $G$ into $\R^{d+1}$ as follows:
$$ \phi(i) = (i \delta, \psi(i)) ,$$
where $\delta = 1/(n-1)$. Take $z = e_1$, the first coordinate direction, and threshold $t = \sqrt{2\Delta - 1}$.

To see that this works, pick any $i < j$. First off, if $(i,j) \in E$, then $\{i,j\} \in E'$ and
\begin{align*}
    \|\phi(j) - (\phi(i) + z)\|^2 
    &= (j\delta - i\delta - 1)^2 + \|\psi(i) - \psi(j)\|^2 \\
    &\leq (1-\delta)^2 + 2(\Delta - 1) \ \leq \ t^2 .
\end{align*}
On the other hand, if $(i,j) \not\in E$, then $\{i,j\}\not\in E'$, and we have
$$
\|\phi(j) - (\phi(i) + z)\|^2 
\ > \ \|\psi(i) - \psi(j)\|^2 
\ = \ 2 \Delta \ > \ t^2 .
$$
Finally, we confirm that the embedding does not suggest a back edge from $j$ to $i$:
\begin{align*}
    \|\phi(i) - (\phi(j) + z)\|^2 
    &= (i\delta - j\delta - 1)^2 + \|\psi(i) - \psi(j)\|^2 \\
    &\geq (1+\delta)^2 + 2(\Delta - 1) \ > \ t^2 .
\end{align*}   
\end{proof}

\begin{open}
What characterization can be given for the \emph{minimum} dimension of a translational embedding of a dag? 
\end{open}

\section{Distance embeddings}

\begin{defn}
A \emph{distance embedding} of a directed graph $G = (V,E)$ is given by a pair of mappings $\phi_{in}, \phi_{out}: V \rightarrow \R^d$, and a threshold $t$, such that for all pairs of nodes $u, v$,
$$ (u,v) \in E \ \Longleftrightarrow \ \|\phi_{out}(u) - \phi_{in}(v) \| \leq t .$$
We will sometimes be interested in distance embeddings \emph{into the unit sphere}, where all $\phi_{in}(u)$ and $\phi_{out}(v)$ have length one.
\end{defn}

This formalism captures several types of embedding that have been proposed in the machine learning literature. Recall, for instance, the notion of a {\it structured embedding}~\citep{BWCB11}, given by $\phi: V \rightarrow \R^d$ and $d \times d$ matrices $L$ and $R$, where
$ (u,v) \in E \Longleftrightarrow L \phi(u) \approx R \phi(v).$
This can be converted into a distance embedding by taking $\phi_{out}(u) = L \phi(u)$ and $\phi_{in}(u) = R \phi(u)$. Conversely, if a graph has distance embedding $\phi_{in}, \phi_{out}: V \rightarrow \R^d$, then it has a structured embedding $(\phi: V \rightarrow \R^{2d}, L, R)$, where $\phi(u)$ is the concatenation of $\phi_{in}(u)$ and $\phi_{out}(u)$ and matrices $L$ and $R$ retrieve the bottom and top $d$ coordinates, respectively, of a $2d$-dimensional vector.

In the above formulation of distance embedding, there is a single threshold, $t$, that applies for all points. An alternative would be to allow a different threshold $t_u$ for each node $u$, so that
$$ (u,v) \in E \ \Longleftrightarrow \ \|\phi_{out}(u) - \phi_{in}(v) \| \leq t_u .$$
This is easily simulated under our current definition, by adding an extra dimension. Given an embedding $\phi_{in}, \phi_{out}: V \rightarrow \R^d$ with varying thresholds $t_u$, we can define $\tilde{\phi}_{in}: \tilde{\phi}_{out}: V \rightarrow \R^{d+1}$ by $\tilde{\phi}_{in}(u) = (\phi_{in}(u), 0)$ and $\tilde{\phi}_{out}(u) = (\phi_{out}(u), \sqrt{t^2 - t_u^2})$, where $t = \max_u t_u$. Then
$$ \|\phi_{out}(u) - \phi_{in}(v) \| \leq t_u \ \Longleftrightarrow \ \|\tilde{\phi}_{out}(u) - \tilde{\phi}_{in}(v)\| \leq t .$$

We will shortly see that every directed graph has a distance embedding. It is of interest, then, to characterize the minimum achievable dimension.
\begin{defn}
Let $d_{dist}(G)$ be the smallest dimension $d$ of any distance embedding of $G$. Let $d^{\circ}_{dist}(G)$ be the smallest dimension of any distance embedding into the unit sphere.
\end{defn}

A useful observation is that $d_{dist}$ and $d_{dist}^\circ$ do not differ by much.
\begin{thm}
For any directed graph $G$, we have $d_{dist}(G) \leq d_{dist}^{\circ}(G) \leq d_{dist}(G) + 1$.
\label{thm:distance-dimension-spherical}
\end{thm}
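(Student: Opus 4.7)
The left-hand inequality $d_{dist}(G) \leq d^{\circ}_{dist}(G)$ is immediate, since any distance embedding into the unit sphere is, in particular, a valid distance embedding in the ambient space.

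For the upper bound $d^{\circ}_{dist}(G) \leq d_{dist}(G) + 1$, my plan is to start with an arbitrary distance embedding $\phi_{in}, \phi_{out}: V \to \R^d$ with threshold $t$ and lift it to the unit sphere in $\R^{d+1}$ by appending one coordinate that normalizes every vector to a common length. Concretely, letting $M = \max_{v \in V} \max(\|\phi_{in}(v)\|, \|\phi_{out}(v)\|)$ and picking any $R > M$, I would define
$$\tilde{\phi}_{in}(v) = \tfrac{1}{R}\bigl(\phi_{in}(v),\, \sqrt{R^2 - \|\phi_{in}(v)\|^2}\bigr), \qquad \tilde{\phi}_{out}(u) = \tfrac{1}{R}\bigl(\phi_{out}(u),\, \sqrt{R^2 - \|\phi_{out}(u)\|^2}\bigr),$$
so that both maps land on the unit sphere of $\R^{d+1}$ by a direct norm calculation. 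The task then reduces to showing that, for $R$ chosen large enough, some new threshold $t'$ separates edges from non-edges in the lifted embedding.

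The calculation I would then carry out is the expansion
$$R^2\,\|\tilde{\phi}_{out}(u) - \tilde{\phi}_{in}(v)\|^2 = \|\phi_{out}(u) - \phi_{in}(v)\|^2 + \bigl(\sqrt{R^2 - \|\phi_{out}(u)\|^2} - \sqrt{R^2 - \|\phi_{in}(v)\|^2}\bigr)^2,$$
which shows that the lifted squared distance equals the original one up to a nonnegative correction depending on the individual norms. Since $V$ is finite, the original embedding enjoys a strict gap: letting $t_- = \max_{(u,v) \in E} \|\phi_{out}(u) - \phi_{in}(v)\|$ and $t_+ = \min_{(u,v) \notin E} \|\phi_{out}(u) - \phi_{in}(v)\|$, we have $t_- \leq t < t_+$, so $t_+^2 - t_-^2 > 0$.

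The main (mild) obstacle is bounding the correction uniformly. Using the conjugate identity $\sqrt{A} - \sqrt{B} = (A-B)/(\sqrt{A}+\sqrt{B})$ together with the bound $\|\cdot\| \leq M$, the correction is at most $M^4/(4(R^2 - M^2))$, which vanishes as $R \to \infty$. Choosing $R$ large enough that this bound is strictly less than $t_+^2 - t_-^2$ guarantees that lifted edge distances remain strictly below lifted non-edge distances, and then any threshold $t'$ lying between them witnesses a distance embedding of $G$ on the unit sphere of $\R^{d+1}$, completing the proof.
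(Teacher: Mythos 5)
Your proof is correct and complete. It realizes the same geometric idea as the paper --- add one dimension and push the point set onto a sphere whose radius is large compared to the scale of the embedding --- but via a different lift, and it is worth noting the difference. The paper's (sketched) construction scales the embedding down, appends a constant coordinate, and radially normalizes each vector, so one must control a \emph{multiplicative} distortion of all pairwise distances coming from the varying normalizing denominators; the fully worked-out version of that computation only appears later, in the proof of Theorem~\ref{thm:delta-dist-sim-2}. You instead append the compensating coordinate $\sqrt{R^2 - \|\phi(\cdot)\|^2}$ so that all lifted vectors have the common length $R$ before the single uniform rescaling by $1/R$. This yields the exact identity
$$R^2\,\|\tilde{\phi}_{out}(u) - \tilde{\phi}_{in}(v)\|^2 = \|\phi_{out}(u) - \phi_{in}(v)\|^2 + \bigl(\sqrt{R^2 - \|\phi_{out}(u)\|^2} - \sqrt{R^2 - \|\phi_{in}(v)\|^2}\bigr)^2,$$
in which the error is a nonnegative \emph{additive} correction, uniformly bounded by $M^4/(4(R^2-M^2))$ via the conjugate trick, and absorbed into the strict finite-$V$ gap $t_+^2 - t_-^2 > 0$. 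Your version is the cleaner argument for this particular theorem (an exact decomposition with a one-line error bound, and an explicit new threshold), while the paper's radial-normalization map has the advantage that it is the one reused quantitatively to track robustness in Theorems~\ref{thm:delta-dist-sim} and~\ref{thm:delta-dist-sim-2}. The only degenerate cases your write-up leaves implicit are $E = \emptyset$ or $E$ containing all pairs, where $t_-$ or $t_+$ is a max or min over an empty set; both are trivial to handle separately.
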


\begin{proof}
The first inequality is trivial. We give an informal sketch of the second, since the details also appear in Theorem~\ref{thm:delta-dist-sim}. A distance embedding $\phi$ of $G$ in $\R^d$ can be mapped to an embedding $\phi'$ in a small neighborhood of the unit sphere $S^d \subset \R^{d+1}$. To see this, notice that scaling down $\phi$ (and $t$) by a constant factor maintains the embedding property. If they are sufficiently downscaled that the set of embedded points lies within a $d$-dimensional ball of very small radius, then this ball can be placed close to the surface of the unit sphere in $\R^{d+1}$, and the points can be projected to the surface of the sphere while inducing an arbitrarily small multiplicative distortion in pairwise distances.
\end{proof}

As described in the introduction, earlier work has brought out troubling pathologies in the precision required for embedding an undirected graph: achieving the minimum possible dimension could require the vectors to be specified using a number of bits that is exponential in $|V|$~\citep{KM12}. For this reason, we keep careful track of precision. Our key tool in doing so is a notion of {\it robustness}, which we will later relate to both precision and dimension.
\begin{defn}
Suppose a distance embedding of a directed graph $G = (V,E)$ is given by $(\phi_{in}, \phi_{out}, t)$. We say the embedding is $\delta$-robust, for $\delta > 0$, if
\begin{itemize}
\item $(u,v) \in E \ \Longrightarrow \ \|\phi_{out}(u) - \phi_{in}(v)\|^2 \leq t^2$.
\item $(u,v) \not\in E \ \Longrightarrow \ \|\phi_{out}(u) - \phi_{in}(v)\|^2 \geq t^2 (1+\delta)$.
\end{itemize}
\end{defn}

We now show that all directed graphs have distance embeddings.
\begin{thm}
Let $G = (V,E)$ be any directed graph. Let $A$ be its $|V|\times |V|$ adjacency matrix: that is, $A_{uv}$ is $1$ if $(u,v) \in E$ and $0$ otherwise. Let $k$ denote the rank of $A$ and $\sigma_1$ its largest singular value. Then $G$ has a distance embedding into the unit sphere in $\R^k$ that is $(1/\sigma_1)$-robust.
\label{thm:distance-embedding-existence}
\end{thm}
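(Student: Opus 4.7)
The plan is to read the embedding directly off the singular value decomposition of $A$. The point is that the rank condition on $A$ gives us a low-dimensional factorization, while $\sigma_1$ tells us how much we must scale down so that the resulting inner products cleanly separate edges from non-edges.

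First I would form the reduced SVD $A = U\Sigma V^T$ with $U,V\in\R^{|V|\times k}$ having orthonormal columns and $\Sigma = \mathrm{diag}(\sigma_1,\dots,\sigma_k)$, then factor $A/\sigma_1$ as $PQ^T$ with $P = U\sqrt{\Sigma/\sigma_1}$ and $Q = V\sqrt{\Sigma/\sigma_1}$. The rows of these matrices satisfy $P_u\cdot Q_v = A_{uv}/\sigma_1$, which is $1/\sigma_1$ on edges and $0$ on non-edges. A short observation to record is that because the columns of $U$ and $V$ are orthonormal (so every row of $U$ or $V$ already has norm at most $1$) and the entries of $\Sigma/\sigma_1$ lie in $(0,1]$, each row $P_u$ and $Q_v$ has Euclidean norm at most $1$, so the points lie in the closed unit ball of $\R^k$.

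Next I would push these points onto the unit sphere without disturbing the inner products. The cleanest way is to append two new coordinates and set
$\phi_{out}(u) = (P_u,\sqrt{1-\|P_u\|^2},0)$ and $\phi_{in}(v) = (Q_v,0,\sqrt{1-\|Q_v\|^2})$.
Both are unit vectors, and because one of the two padding coordinates is zero on each side, $\phi_{out}(u)\cdot\phi_{in}(v) = P_u\cdot Q_v = A_{uv}/\sigma_1$. The robustness claim is then a direct computation: since the vectors are unit, $\|\phi_{out}(u)-\phi_{in}(v)\|^2 = 2 - 2A_{uv}/\sigma_1$, which equals $2(1-1/\sigma_1)$ on edges and $2$ on non-edges. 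Taking $t^2 = 2(1-1/\sigma_1)$ gives
\[
t^2(1+1/\sigma_1) \;=\; 2\left(1 - 1/\sigma_1^2\right) \;\leq\; 2,
\]
so every non-edge distance is at least $t^2(1+1/\sigma_1)$ while every edge distance is at most $t^2$, which is exactly $(1/\sigma_1)$-robustness.

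The main obstacle I would expect is the ambient dimension count. The construction above naturally lands in $\R^{k+2}$ because of the two padding coordinates, whereas the theorem asks for $\R^k$. Trimming those two coordinates requires either exploiting the gauge freedom $(P,Q)\mapsto(PM,QM^{-T})$ in the factorization to make the rows of $P$ and $Q$ already have unit norm (a more delicate choice of factor than the plain SVD), or invoking a dimension-economizing argument similar in spirit to the proof of Theorem~\ref{thm:distance-dimension-spherical}. The inner-product design and the robustness computation above are the robust core of the argument; pinning the dimension down to exactly $k$ is where I would expect the finicky part of the proof to live.
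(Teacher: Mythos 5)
Your factorization and robustness computation are exactly the paper's: both proofs read $\phi_{out},\phi_{in}$ off the SVD so that the dot products recover $A$ (up to scale), place the points on the unit sphere, and then observe that edges give squared distance $2(1-1/\sigma_1)$ while non-edges give $2$, which is a $(1/\sigma_1)$-robust separation. The one genuine gap is the step you yourself flag: your method of getting onto the sphere (appending two padding coordinates) lands in $\R^{k+2}$, so as written you have only proved the theorem with the dimension weakened by two, and you leave the reduction to $\R^k$ as an open "finicky part."

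The resolution is much simpler than the gauge-freedom or dimension-economizing arguments you propose: do not try to preserve the inner products exactly. The paper simply normalizes each vector $\phi_{out}(i),\phi_{in}(j)\in\R^k$ radially to unit length. This changes each dot product by dividing it by $\|\phi_{out}(i)\|\,\|\phi_{in}(j)\|$, but that is harmless for the dichotomy you actually need: a zero dot product (non-edge) stays zero, and a positive dot product (edge) can only increase, since in your scaling the norms are at most $1$ (in the paper's unscaled version they are at most $\sqrt{\sigma_1}$, so the edge dot product $1$ becomes at least $1/\sigma_1$). After normalization the points lie on $S^{k-1}\subset\R^k$, edges satisfy $\widehat{\phi}_{out}(i)\cdot\widehat{\phi}_{in}(j)\geq 1/\sigma_1$, non-edges satisfy $\widehat{\phi}_{out}(i)\cdot\widehat{\phi}_{in}(j)=0$, and your own distance computation with $t^2 = 2(1-1/\sigma_1)$ finishes the proof in dimension $k$. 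The lesson is that the construction only needs the \emph{signs and a lower bound} on the dot products to survive the projection to the sphere, not their exact values.
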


\begin{proof}
For convenience, label the vertices $1, 2, \ldots, n$. Take the singular value decomposition of $A$ so that $A = U^T\Sigma V$ where $U$ and $V$ are $n \times n$ orthogonal matrices, and $\Sigma$ is a diagonal matrix with entries $\sigma_1 \geq \sigma_2 \geq \dots \geq \sigma_n$. If rank $k < n$, then $\sigma_{k+1} = \cdots = \sigma_n = 0$.

Writing $A = (\Sigma^{1/2} U)^T (\Sigma^{1/2} V)$, take $\phi_{out}(i) \in \R^k$ to be the first $k$ coordinates of the $i$th column of $\Sigma^{1/2} U$ (the remaining coordinates are zero), and $\phi_{in}(i) \in \R^k$ to be the first $k$ coordinates of the $i$th column of $\Sigma^{1/2} V$. Then $A_{ij} = \phi_{out}(i) \cdot \phi_{in}(j)$. These vectors all have length at most $\sqrt{\sigma_1}$; normalize them to unit length, to get $\widehat{\phi}_{out}, \widehat{\phi}_{in}: V \rightarrow S^{k-1}$. Then
\begin{itemize}
    \item $(i,j) \in E \Longrightarrow \widehat{\phi}_{out}(i) \cdot \widehat{\phi}_{in}(j) \geq 1/\sigma_1$ and $\|\widehat{\phi}_{out}(i) - \widehat{\phi}_{in}(j)\|^2 \leq 2(1 - 1/\sigma_1)$.
    \item $(i,j) \not\in E \Longrightarrow \widehat{\phi}_{out}(i) \cdot \widehat{\phi}_{in}(j) = 0$ and $\|\widehat{\phi}_{out}(i) - \widehat{\phi}_{in}(j)\|^2 = 2$.
\end{itemize}
Setting $t = \sqrt{2(1-1/\sigma_1)}$, we see the embedding is $\delta$-robust for $\delta \geq 1/(1-1/\sigma_1) - 1 \geq 1/\sigma_1$.
\end{proof}

As a consequence, any graph of constant degree is robustly embeddable. The proof of the following corollary is deferred to the appendix.
\begin{cor}
Suppose all nodes in $G$ have indegree $\leq \Delta_-$ and outdegree $\leq \Delta_+$. Then $G$ has a distance embedding that is $\sqrt{1/(\Delta_+\Delta_-)}$-robust.
\label{cor:robustness-degree}
\end{cor}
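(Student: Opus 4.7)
The plan is to invoke Theorem~\ref{thm:distance-embedding-existence} directly and then bound the largest singular value $\sigma_1$ of the adjacency matrix $A$ in terms of the degree parameters $\Delta_+$ and $\Delta_-$. Since Theorem~\ref{thm:distance-embedding-existence} guarantees a distance embedding that is $(1/\sigma_1)$-robust, and robustness is monotone (a $\delta$-robust embedding is automatically $\delta'$-robust for any $\delta' \leq \delta$), it suffices to show $\sigma_1 \leq \sqrt{\Delta_+ \Delta_-}$.

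The key inequality I would use is the standard bound $\|A\|_{op} \leq \sqrt{\|A\|_1 \|A\|_\infty}$, where $\|A\|_1$ denotes the maximum absolute column sum and $\|A\|_\infty$ the maximum absolute row sum. For the $0/1$ adjacency matrix with the convention $A_{uv} = 1 \Longleftrightarrow (u,v) \in E$, the row sum at $u$ is precisely the outdegree of $u$, and the column sum at $v$ is the indegree of $v$. Hence $\|A\|_\infty \leq \Delta_+$ and $\|A\|_1 \leq \Delta_-$, giving $\sigma_1 = \|A\|_{op} \leq \sqrt{\Delta_+ \Delta_-}$ and therefore $1/\sigma_1 \geq \sqrt{1/(\Delta_+\Delta_-)}$.

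For completeness I would include a short justification of the norm inequality itself, since the paper has not cited it. The cleanest route is to use $\|A\|_{op}^2 = \|A^T A\|_{op}$, observe that any symmetric matrix has operator norm bounded by its maximum absolute row sum (by the Gershgorin/Perron argument applied to $A^T A$), and note that the maximum row sum of $A^T A$ is bounded by $\|A\|_1 \|A\|_\infty$ entry-by-entry. Alternatively, a Cauchy--Schwarz computation on $\langle x, Ay \rangle$ with carefully chosen weightings of $x$ and $y$ accomplishes the same thing in one line.

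There is no real obstacle here: the whole corollary is essentially a one-line consequence of the previous theorem plus a classical operator-norm estimate. The only thing to be mildly careful about is the direction of inequality when passing from the $(1/\sigma_1)$-robustness guarantee to the stated $\sqrt{1/(\Delta_+\Delta_-)}$-robustness, and the fact that for directed graphs the two degree parameters play asymmetric roles corresponding to the two Hölder dual norms of $A$.
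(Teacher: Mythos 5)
Your proposal is correct and follows essentially the same route as the paper: both invoke Theorem~\ref{thm:distance-embedding-existence} and then bound $\sigma_1 \leq \sqrt{\Delta_+\Delta_-}$ by controlling the row sums of $A^TA$ (the paper does this via a Perron-style argument on the top eigenvector of $A^TA$, which is the same calculation underlying the norm bound $\|A\|_{op} \leq \sqrt{\|A\|_1\|A\|_\infty}$ you cite). No gaps.
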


\subsection{Differences from undirected embeddings}

At a first glance, it may seem that directed embeddings may not be significantly different from undirected embeddings considering standard transformations between the two types of graphs, as defined below. However, we will see that this is not the case, by considering some examples in which a directed graph can be embedded in much lower dimension than its undirected counterpart.
\begin{defn}
\begin{enumerate}
	\item For an undirected graph $G$, let $\dG$ be the directed graph which has edges $(u,v), (v,u) \in E(\dG)$ for every $\{u,v\} \in E(G)$.
	\item For a directed graph $G$, let $\uG$ be the undirected graph with 2 vertices $v_{out}, v_{in}$ for every $v \in V(G)$ and with edge $\{u_{out}, v_{in}\} \in E(\uG)$ for every $(u,v) \in E(G)$.
\end{enumerate}
\end{defn}

\begin{thm}\label{undir_diff_thm}
Let $G = K_{n,n}$ be the undirected complete bipartite graph. The sphericity of $G$ is $\Omega(n)$ whereas $d_{dist}(\dG)$ is $1$.
\end{thm}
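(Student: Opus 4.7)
The theorem has two parts, which I handle separately.

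\textbf{Upper bound} $d_{dist}(\dG) \le 1$. Let $V = A \sqcup B$ be the bipartition of $G$. In $\R^1$, set $\phi_{out}(u) = 0$ and $\phi_{in}(u) = 1$ for $u \in A$, and swap these values for $u \in B$; take threshold $t = 1/2$. Every edge of $\dG$ is a cross-part pair $(u,v)$, yielding $|\phi_{out}(u) - \phi_{in}(v)| = 0 \le t$, while every non-edge is a same-part pair, yielding $|\phi_{out}(u) - \phi_{in}(v)| = 1 > t$. This verifies all four cases (direction and part), so $d_{dist}(\dG) \le 1$.

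\textbf{Sphericity lower bound.} Suppose $\phi : V \to \R^d$ is a sphericity embedding of $K_{n,n}$ and write $x_i = \phi(a_i)$, $y_j = \phi(b_j)$. Translate so that $\sum_i x_i = 0$, and proceed in three steps. First, expanding $\sum_{i \ne i'} \|x_i - x_{i'}\|^2 > n(n-1)$ under the centering gives $\sum_i \|x_i\|^2 > (n-1)/2$. Second, summing the cross-pair constraint $\|x_i - y_j\|^2 \le 1$ over $i$ and using the centering gives $\sum_i \|x_i\|^2 + n \|y_j\|^2 \le n$, hence $\|y_j\|^2 \le (n+1)/(2n) < 1$ for every $j$. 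Third, combining this norm bound with $\|y_j - y_{j'}\|^2 > 1$ yields the pairwise upper bound $y_j \cdot y_{j'} < 1/(2n)$. The Gram matrix $G = (y_j \cdot y_{j'})_{j,j'}$ is PSD of rank at most $d$, and I would close the argument via the standard inequality $\mathrm{rank}(G) \ge (\mathrm{tr}\, G)^2 / \|G\|_F^2$: with $\mathrm{tr}\,G > (n-1)/2$, a Frobenius-norm bound $\|G\|_F^2 = O(n)$ would give $\mathrm{rank}(G) = \Omega(n)$, hence $d = \Omega(n)$.

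The main obstacle is bounding $\|G\|_F^2 = O(n)$. The bound $y_j \cdot y_{j'} < 1/(2n)$ is only one-sided, and a priori some pairs of $y_j$'s could be near-antipodal (with $y_j \cdot y_{j'}$ close to $-1/2$), which would inflate $\|G\|_F^2$ to $\Omega(n^2)$ and weaken the rank bound to a constant. Overcoming this requires either (i) exploiting the symmetric constraints on the $x_i$ side to rule out too many antipodal configurations among the $y_j$'s, or (ii) passing to a sign-rank argument: the $2n \times 2n$ matrix $D' = J - (\|\phi(u) - \phi(v)\|^2)_{u,v}$ has rank at most $d+3$ and matches a specific sign pattern determined by the bipartite complement structure of $K_{n,n}$, and showing that this sign pattern has sign rank $\Omega(n)$ delivers the desired lower bound on $d$ directly.
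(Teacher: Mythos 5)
Your upper bound is correct and is essentially the paper's construction: the paper places $\phi_{out},\phi_{in}$ at $\mp 1$ and $\pm 1$ on the two sides with $t=0$; your $\{0,1\}$ placement with $t=1/2$ is the same one-dimensional idea, and your case check is complete.

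For the sphericity lower bound there is a genuine gap, and you have diagnosed it correctly yourself. (For context: the paper does not prove this half either; it simply cites Maehara's known result on the sphericity of $K_{n,n}$, so a citation would have sufficed.) Your preliminary inequalities are fine: centering the $x_i$ gives $\sum_i\|x_i\|^2 > (n-1)/2$, hence $\|y_j\|^2 < (n+1)/(2n)$ and $y_j\cdot y_{j'} < 1/(2n)$. But the rank bound $\mathrm{rank} \geq (\mathrm{tr})^2/\|\cdot\|_F^2$ applied to the Gram matrix of the $y_j$ requires two-sided control of the off-diagonal entries, and your inequalities are only one-sided: nothing derived so far forbids many pairs with $y_j\cdot y_{j'}\approx -1/2$. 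Indeed, two vectors of norm about $1/\sqrt{2}$ at distance about $\sqrt{2}$ satisfy the same-part constraint $\|y_j-y_{j'}\|>1$, and the unit balls around them still intersect, so the cross-part constraints do not obviously exclude such configurations. A priori the Frobenius norm can then be $\Theta(n^2)$, which collapses the rank bound to $\Omega(1)$. Neither of your proposed repairs (exploiting the $x_i$-side constraints, or switching to a sign-rank argument on the $2n\times 2n$ distance matrix) is actually carried out, so the lower-bound half remains incomplete as written; the clean fix is to invoke the known sphericity lower bound for complete bipartite graphs, as the paper does.
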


A proof can be found in the appendix. An important intuition from this example is that embedding undirected bipartite graphs can be difficult because they have large independent sets with many common neighbors. However, for directed graphs this does not present a problem because of the flexibility that comes from having two embeddings, $\phi_{in}$ and $\phi_{out}$.

This idea is also what makes embedding $\uG$ significantly more difficult than embedding a directed graph $G$ as $\uG$ has large independent sets. 

\begin{thm}
Let $G = \overleftrightarrow{K_n}$ be a directed graph with every possible edge (including self loops). Then $d_{dist}(G) = 0$ while $\uG$ has sphericity $\Omega(n)$.
\end{thm}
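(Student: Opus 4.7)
The plan is to dispatch the two halves of the statement separately; both follow almost immediately once set up correctly, so this is really more of a bookkeeping exercise than a genuine proof.

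For the upper bound $d_{dist}(G) = 0$, I would just exhibit a zero-dimensional distance embedding. Since $\R^0$ consists of a single point, any choice of $\phi_{in}, \phi_{out}: V \to \R^0$ forces $\|\phi_{out}(u) - \phi_{in}(v)\| = 0$ for every pair $(u,v)$. Taking any threshold $t \geq 0$, the defining biconditional $(u,v) \in E \iff \|\phi_{out}(u) - \phi_{in}(v)\| \leq t$ reduces to ``True $\iff$ True'', because $\overleftrightarrow{K_n}$ contains every ordered pair of vertices (self-loops included). Hence $d_{dist}(G) \leq 0$, and as dimensions are nonnegative we obtain equality.

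For the sphericity lower bound, the plan is to identify $\uG$ explicitly and then cite Theorem~\ref{undir_diff_thm}. By construction $\uG$ has $2n$ vertices, split into ``out-copies'' $\{v_{out} : v \in V(G)\}$ and ``in-copies'' $\{v_{in} : v \in V(G)\}$, with an edge $\{u_{out}, v_{in}\}$ for each $(u,v) \in E(G)$. Because $G = \overleftrightarrow{K_n}$ has every directed edge, the construction places an edge between every out-copy and every in-copy; and the recipe never produces out-out or in-in edges. Thus $\uG$ is precisely the undirected complete bipartite graph $K_{n,n}$ on these two sides. Theorem~\ref{undir_diff_thm} already states that the sphericity of $K_{n,n}$ is $\Omega(n)$, finishing the argument.

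There is no real obstacle here; the entire force of the theorem lives in the earlier sphericity bound for $K_{n,n}$. What makes the statement interesting is its contrast: a graph with the tiniest possible $d_{dist}$ gets mapped by the standard ``directed to undirected'' transformation to one with enormous sphericity, which rhetorically justifies studying directed distance embeddings on their own rather than reducing to the undirected case.
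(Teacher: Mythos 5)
Your proof is correct, and since the paper states this theorem without providing any proof, your argument is surely the intended one: the $\R^0$ observation for $d_{dist}(G)=0$ is immediate from the definition (all pairs, including self-loops, are edges, so the biconditional is vacuously satisfied at threshold $t=0$), and the identification $\uG = K_{n,n}$ followed by an appeal to Theorem~\ref{undir_diff_thm} handles the sphericity bound. No gaps.
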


\subsection{Robustness yields low dimensionality}

We now show that any graph with a $\delta$-robust embedding can be embedded in dimension $O((1/\delta^2) \log n)$.
\begin{thm}\label{thm_JL_project}
If $G$ has a $\delta$-robust distance embedding (in any dimension), then it also has a $\frac{\delta}{2}$-robust embedding in $O(\frac{1}{\delta^2}\log n)$ dimensions. 
\end{thm}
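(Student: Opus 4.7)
The plan is to compress the given embedding via a Johnson-Lindenstrauss style random linear projection. Suppose $(\phi_{in}, \phi_{out}, t)$ is a $\delta$-robust distance embedding of $G$ in some ambient space $\R^D$, where $D$ may be arbitrarily large. I would apply the JL lemma simultaneously to the $2n$ points $\{\phi_{in}(v), \phi_{out}(v) : v \in V\}$ to produce compressed maps $\widetilde{\phi}_{in}, \widetilde{\phi}_{out} : V \to \R^d$ for which every squared distance $\|\phi_{out}(u) - \phi_{in}(v)\|^2$ is preserved within a multiplicative factor of $(1 \pm \epsilon)$, for a parameter $\epsilon$ to be chosen. Because the projection is linear, it acts on the difference vectors $\phi_{out}(u) - \phi_{in}(v)$, so preserving pairwise distances within the union of the two point sets is exactly what is needed. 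A union bound over the at most $n^2$ relevant pairs, combined with the standard JL tail bound, gives this property with positive probability in dimension $d = O(\epsilon^{-2} \log n)$.

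Next, I would set the new threshold to $(t')^2 = (1+\epsilon)\, t^2$. By construction, every edge $(u,v)$ satisfies $\|\widetilde{\phi}_{out}(u) - \widetilde{\phi}_{in}(v)\|^2 \leq (1+\epsilon)\, t^2 = (t')^2$, so the edge side of the $\delta/2$-robustness condition is immediate. For a non-edge, combining the JL guarantee with the original $\delta$-robustness yields
\[
\|\widetilde{\phi}_{out}(u) - \widetilde{\phi}_{in}(v)\|^2 \geq (1-\epsilon)(1+\delta)\, t^2.
\]
To finish, I need $(1-\epsilon)(1+\delta) \geq (1+\epsilon)(1+\delta/2)$, which rearranges to $\epsilon \leq \delta/(4 + 3\delta)$. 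Picking $\epsilon = \delta/8$, say, satisfies this for all $\delta \in (0,1]$, and substituting back into the JL dimension bound yields $d = O(\delta^{-2}\log n)$, exactly as claimed.

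The only thing requiring real care is the constant juggling at the end: one must verify that a single small constant multiple of $\delta$ suffices uniformly in $\delta$. I do not expect any deeper obstacle, since the argument is essentially the standard template \emph{a robustly separated property is preserved under a sufficiently mild JL projection}. The one modeling decision worth making explicit is that JL should be applied to the $\phi_{in}$ and $\phi_{out}$ points \emph{jointly}, so that the resulting linear map automatically controls the cross distances $\|\widetilde{\phi}_{out}(u) - \widetilde{\phi}_{in}(v)\|$ that actually appear in the definition of a distance embedding.
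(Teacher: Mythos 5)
Your proposal is correct and follows essentially the same route as the paper: apply the Johnson--Lindenstrauss lemma jointly to the $2n$ points $\{\phi_{in}(v),\phi_{out}(v)\}$, adjust the threshold, and take $\epsilon = \delta/8$ so that the multiplicative distortion is absorbed into a loss of at most half the robustness. Your version is in fact slightly more explicit than the paper's (which omits the threshold bookkeeping and the inequality $\epsilon \leq \delta/(4+3\delta)$), but there is no substantive difference.
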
 

\begin{proof}
This is a consequence of a lemma of \citet{JL84}. Let $\phi_{out}, \phi_{in}: V \rightarrow \R^d$, with threshold $t$, be a $\delta$-robust embedding of $G$. The JL lemma states that for any $\epsilon > 0$, there exists a map $f: \R^d \rightarrow \R^m$, with $m = O((\log n)/\epsilon^2)$, so that
$$(1-\epsilon)\|\phi_{out}(u) - \phi_{in}(v)\|^2 \leq \|f(\phi_{out}(u)) - f(\phi_{in}(v))\|^2 \leq(1+\epsilon) \|\phi_{out}(u) - \phi_{in}(v)\|^2,$$ 
for all $u, v \in V$. To ensure that the new embedding is $(\delta/2)$-robust, it suffices to take $\epsilon = \delta/8$.
\end{proof}

Later, we will see that a graph with a $\delta$-robust embedding is in fact robustly embeddable in the $O((1/\delta^2) \log n)$-dimensional {\it Hamming cube}. In this way, robustness implies the existence of a low-dimensional embedding that requires only one bit of precision per coordinate.
  
\section{Similarity embeddings}

\begin{defn}
A {\it similarity embedding} of a directed graph $G = (V,E)$ is given by a pair of mappings $\phi_L, \phi_R: V \rightarrow \R^d$ and a threshold $t$, such that
$$ (u,v) \in E \ \Longleftrightarrow \ \phi_L(u) \cdot \phi_R(v) \geq t .$$
We will often be interested in embeddings \emph{into the unit sphere}, where the $\phi_L(u)$ and $\phi_R(u)$ have unit norm. We use $L, R$ notation as opposed to $\{\mbox{\rm in}, \mbox{\rm out}\}$ to help distinguish similarity embeddings from distance embeddings.
\end{defn}

This is closely related to the notion of bilinear embedding~\citep{NTK11}, which assigns each node $u$ to a vector $\phi(u) \in \R^d$ so that
$ (u,v) \in E \Longleftrightarrow \phi(u)^T A \phi(v) \geq t $,
for some $d \times d$ matrix $A$. To obtain a similarity embedding, take $\phi_L(u) = \phi(u)$ and $\phi_R(u) = A \phi(u)$. Conversely, given a similarity embedding $\phi_L, \phi_R: V \rightarrow \R^d$, we can construct a bilinear embedding by setting $\phi(u)$ to the $2d$-dimensional concatenation of $\phi_L(u)$ and $\phi_R(u)$, and taking $A$ to be $\begin{pmatrix} 0 & I \\ 0 & 0 \end{pmatrix}$.

The distance embedding constructed in Theorem~\ref{thm:distance-embedding-existence} also functions as a similarity embedding. Thus, such embeddings exist for every graph.
\begin{defn}
For directed graph $G$, let $d_{sim}(G)$ denote the smallest dimension into which a similarity embedding can be given.
\end{defn}

We now see that the dimensions $d_{dist}, d_{dist}^{\circ}, d_{sim}$ are almost identical.
\begin{thm}
$d_{dist}^{\circ}(G) -1 \leq d_{sim}(G) \leq d_{dist}^{\circ}(G) \leq d_{dist}(G) + 1$ for any directed graph $G$.
\label{thm:relating-dimensions}
\end{thm}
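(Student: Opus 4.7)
The rightmost inequality $d_{dist}^{\circ}(G) \leq d_{dist}(G) + 1$ is already established by Theorem~\ref{thm:distance-dimension-spherical}, so three inequalities remain. The middle inequality $d_{sim}(G) \leq d_{dist}^{\circ}(G)$ is immediate from the identity $\|x-y\|^2 = 2 - 2\, x\cdot y$ valid on the unit sphere: given a spherical distance embedding $\phi_{in}, \phi_{out}: V \to S^{d-1}$ with threshold $t$, the edge condition $\|\phi_{out}(u) - \phi_{in}(v)\| \leq t$ is equivalent to $\phi_{out}(u) \cdot \phi_{in}(v) \geq 1 - t^2/2$, which is a similarity embedding of the same dimension with $\phi_L = \phi_{out}$ and $\phi_R = \phi_{in}$.

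The main work is to establish $d_{dist}^{\circ}(G) \leq d_{sim}(G) + 1$. My plan is a homogenization trick: absorb the similarity threshold into the dot product by adjoining one coordinate, then normalize to the unit sphere. Given a similarity embedding $\phi_L, \phi_R: V \to \R^d$ with threshold $t$, I set
\[
\phi_L'(u) = (\phi_L(u),\, 1) \in \R^{d+1}, \qquad \phi_R'(v) = (\phi_R(v),\, -t) \in \R^{d+1},
\]
so that $\phi_L'(u) \cdot \phi_R'(v) = \phi_L(u)\cdot\phi_R(v) - t$, with $(u,v) \in E$ iff this quantity is $\geq 0$. Normalizing to $\psi_L(u) = \phi_L'(u)/\|\phi_L'(u)\|$ and $\psi_R(v) = \phi_R'(v)/\|\phi_R'(v)\|$ preserves the sign of the dot product, so $(u,v) \in E$ iff $\psi_L(u)\cdot\psi_R(v) \geq 0$, which on the sphere is equivalent to $\|\psi_L(u) - \psi_R(v)\| \leq \sqrt{2}$. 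This is a spherical distance embedding in $\R^{d+1}$ with threshold $\sqrt{2}$.

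The main obstacle I foresee is a mild well-definedness issue: the normalization requires $\phi_L'(u)$ and $\phi_R'(v)$ to be nonzero. The former is automatic since its last coordinate is $1$, but $\phi_R'(v) = (\phi_R(v), -t)$ vanishes exactly in the degenerate case $t = 0$ and $\phi_R(v) = 0$. I handle this by first replacing $t$ with some $t' \neq 0$ satisfying $0 < t - t' < \min_{(u,v)\notin E}(t - \phi_L(u)\cdot\phi_R(v))$; this minimum is strictly positive since $V$ is finite and every non-edge yields a strict inequality, so a sufficiently small perturbation preserves the combinatorial embedding while making $\phi_R'(v)$ nonzero everywhere. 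The bookkeeping between strict and non-strict inequalities (edges correspond to $\geq$ on the similarity side and $\leq$ on the distance side; non-edges are strict in both) follows directly from the sign-preservation observation above.
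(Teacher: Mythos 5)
Your proof is correct and follows essentially the same route as the paper: the paper forms the rank-$(d+1)$ matrix $M - tJ$ with $M_{ij} = \phi_L(i)\cdot\phi_R(j)$, factors it as $U^TW$ with $U,W \in \R^{(d+1)\times n}$, and normalizes the columns, and your vectors $(\phi_L(u),1)$, $(\phi_R(v),-t)$ are precisely an explicit such factorization. Your perturbation of $t$ to avoid normalizing a zero vector addresses a degenerate case that the paper's proof silently skips, which is a worthwhile extra detail.
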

\begin{proof}
The inequality $d_{sim}(G) \leq d_{dist}^{\circ}(G)$ is immediate: any distance embedding into the unit sphere automatically meets the requirements of a similarity embedding. The final inequality is from Theorem~\ref{thm:distance-dimension-spherical}. It thus remains to show that $d_{dist}^{\circ}(G) \leq d_{sim}(G) + 1$.

Let $\phi_L, \phi_R: V \rightarrow \R^d$ be a similarity embedding of $G$ with threshold $t$. Indexing vertices as $1, 2, \ldots, n$, let $M$ be an $n \times n$ matrix with $M_{ij} = \phi_L(i) \cdot \phi_R (j)$. 

If $J$ is the all-ones matrix, then $M - tJ$ is matrix of rank at most $d+1$ such that 
$$(M- tJ)_{ij} \geq 0 \Longleftrightarrow (i,j) \in E.$$  
We will extract a distance embedding into the unit sphere from this matrix. 

Express $M -tJ$ as $U^TW$ for $U, W \in \R^{(d+1) \times n}$. Next, normalize the columns of $U$ and $W$ to unit length, to get $\widehat{U}$ and $\widehat{W}$. The key idea is that the pairwise dot products between these unit vectors still satisfy the above criterion. In short, taking $\phi_{out}(i)$ to be the $i$th column of $\widehat{U}$ and $\phi_{in}(i)$ to be the $i$th column of $\widehat{W}$, we get a distance embedding of $G$ into the unit sphere in $\R^{d+1}$:
$$ (i,j) \in E \ \Longleftrightarrow \ \phi_{out}(i) \cdot \phi_{in}(j) \geq 0 \ \Longleftrightarrow \ \|\phi_{out}(i) - \phi_{in}(j)\|^2 \leq 2.$$
\end{proof}

Combined with  Theorem~\ref{thm:distance-dimension-spherical}, this means that $|d_{dist}(G) - d_{sim}(G)| \leq 1$. In contrast, for undirected graphs, the minimum dimension needed by a dot-product embedding could be significantly less than for a distance embedding~\citep{RRS89}.

\subsection{Robust similarity embeddings}

Measuring the robustness of a similarity embedding is a bit different than for distance embeddings. For instance, the threshold for a similarity embedding need not even be positive, and thus a term of form $t(1+\delta)$ is not meaningful. We use an additive rather than multiplicative notion of robustness. 

\begin{defn}
We say a similarity embedding given by $(\phi_L, \phi_R, t)$ is $\delta$-robust, for $\delta > 0$, if
$$ (u,v) \not\in E \ \Longrightarrow \ \phi_L(u) \cdot \phi_R(v) \leq t - \delta \max_{w \in V } \max(\|\phi_L(w)\|^2, \|\phi_R(w)\|^2) .$$
\end{defn}
The term $\max_{w \in V }\|\phi(w)\|^2$ ensures that rescaling a similarity embedding does not change its robustness parameter. 

Theorem~\ref{thm:distance-embedding-existence} produces a distance embedding in the unit sphere, which is therefore also a similarity embedding. The following is an immediate corollary.
\begin{cor}
Let $G = (V,E)$ be any directed graph. If its adjacency matrix has rank $k$ and largest singular value $\sigma_1$, then $G$ has a $(1/\sigma_1)$-robust similarity embedding into the unit sphere in $\R^k$.
\label{cor:similarity-embedding-existence}
\end{cor}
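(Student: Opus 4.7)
The plan is to reuse verbatim the construction from Theorem~\ref{thm:distance-embedding-existence}, which already produced unit-norm maps $\widehat{\phi}_{out}, \widehat{\phi}_{in}: V \to S^{k-1}$ in $\R^k$ with the explicit dot-product behavior
\begin{align*}
(i,j) \in E &\Longrightarrow \widehat{\phi}_{out}(i) \cdot \widehat{\phi}_{in}(j) \geq 1/\sigma_1,\\
(i,j) \notin E &\Longrightarrow \widehat{\phi}_{out}(i) \cdot \widehat{\phi}_{in}(j) = 0.
\end{align*}
So the first step is just to set $\phi_L := \widehat{\phi}_{out}$, $\phi_R := \widehat{\phi}_{in}$, and choose the threshold $t := 1/\sigma_1$.

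Next, I would verify the similarity embedding biconditional. From the first bullet above, if $(i,j) \in E$ then $\phi_L(i) \cdot \phi_R(j) \geq 1/\sigma_1 = t$. From the second bullet, if $(i,j) \notin E$ then $\phi_L(i) \cdot \phi_R(j) = 0 < 1/\sigma_1 = t$ (using $\sigma_1 > 0$, which holds as soon as $E$ is nonempty; the empty-edge case is trivial). So $(i,j) \in E \Longleftrightarrow \phi_L(i) \cdot \phi_R(j) \geq t$, which is exactly the similarity embedding property.

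Finally, I would check the robustness definition. Because both $\phi_L$ and $\phi_R$ map into the unit sphere, $\max_w \max(\|\phi_L(w)\|^2, \|\phi_R(w)\|^2) = 1$. For non-edges $(u,v) \notin E$ we have $\phi_L(u) \cdot \phi_R(v) = 0 = t - (1/\sigma_1) \cdot 1$, so the embedding is $(1/\sigma_1)$-robust in the sense of the definition.

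There is no real obstacle here; the only thing worth double-checking is the correspondence between the distance robustness of Theorem~\ref{thm:distance-embedding-existence} and the (additive) similarity robustness defined above. On the unit sphere one has $\|u - v\|^2 = 2 - 2 u \cdot v$, which translates the squared-distance gap of $2 \cdot (1/\sigma_1)$ between edges and non-edges in the distance picture into exactly the $1/\sigma_1$ additive dot-product gap required by the similarity notion of $\delta$-robustness.
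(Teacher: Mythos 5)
Your proposal is correct and matches the paper's approach exactly: the paper also observes that the unit-sphere construction of Theorem~\ref{thm:distance-embedding-existence} is directly a similarity embedding, and treats the corollary as immediate. Your explicit verification of the threshold $t = 1/\sigma_1$ and of the additive robustness condition (using that the normalization factor $\max_w \max(\|\phi_L(w)\|^2, \|\phi_R(w)\|^2)$ equals $1$ on the sphere) just spells out the details the paper leaves implicit.
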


\subsection{Relationship between similarity-robustness and distance-robustness}

We find that both presented definitions of robustness are closely linked. More specifically, robust similarity embeddings necessarily imply the existence of robust distance embeddings. Robust distance embeddings yield robust similarity embeddings only after normalizing by the diameter of an embedding which we define below.

\begin{defn}
The \emph{diameter} of a distance embedding $(\phi_{in}, \phi_{out}, t)$, denoted $\mbox{diam}(\phi)$, is the maximum distance between any two embedded vectors.  Define the \emph{diameter ratio}, $\mbox{dr}(\phi)$, to be $\mbox{diam}(\phi)/t$.
\end{defn}

We note that the diameter of most embeddings tends to be quite low ($O(1)$ for random graphs for example). 

\begin{thm}
Let $G$ be a directed graph with a $\delta$-robust distance embedding $(\phi_{in}, \phi_{out}, t)$ with diameter ratio $\mbox{dr}(\phi) = D$. Then $G$ has a similarity embedding with robustness $\Omega(\frac{\delta^2}{D^3})$ as $\frac{\delta}{D} \to 0$.  
\label{thm:delta-dist-sim}
\end{thm}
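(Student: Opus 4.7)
The plan is to convert the distance embedding into a similarity embedding by lifting it to the unit sphere in $\R^{d+1}$, in the spirit of the sketch given in Theorem~\ref{thm:distance-dimension-spherical}. The guiding observation is that for unit vectors $a, b$ one has $a \cdot b = 1 - \tfrac{1}{2}\|a - b\|^2$, so any $\delta$-robust distance embedding whose image lies on a sphere immediately yields a similarity embedding of comparable robustness. The work is therefore to get onto the sphere without losing more than a $\mathrm{poly}(\delta/D)$ factor in robustness.

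I will proceed in three steps. First, rescale $(\phi_{in}, \phi_{out}, t)$ by a factor $\alpha = \epsilon/(Dt)$, where $\epsilon \in (0, 1/2)$ is a small parameter to be tuned, and then translate both $\phi_{in}$ and $\phi_{out}$ by a common vector so that one embedded point sits at the origin. Both operations preserve every cross-distance $\|\phi_{out}(u) - \phi_{in}(v)\|$, hence preserve $\delta$-robustness; the new threshold is $\epsilon/D$, and since the scaled diameter is $\epsilon$, every embedded point lies in the closed ball $B(0, \epsilon)$. Second, lift each such point $x$ to $\hat x = (x, \sqrt{1-\|x\|^2}) \in \R^{d+1}$, placing everything on the unit sphere. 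Third, read off a similarity embedding by applying the unit-sphere identity above, with similarity threshold $t_{sim}$ placed between the squared-distance cutoffs for edges and non-edges.

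The main technical task is to bound the distortion of the lift. The factorization $(\sqrt{1-\|x\|^2} - \sqrt{1-\|y\|^2})^2 = (\|x\|^2 - \|y\|^2)^2/(\sqrt{1-\|x\|^2}+\sqrt{1-\|y\|^2})^2$, combined with $|\|x\|^2 - \|y\|^2| \le 2\epsilon\|x-y\|$ for $x, y \in B(0,\epsilon)$, gives $\|x-y\|^2 \le \|\hat x - \hat y\|^2 \le (1 + O(\epsilon^2))\|x-y\|^2$ uniformly. Substituting into the edge/non-edge inequalities produces an additive similarity gap of order $(\epsilon/D)^2(\delta - O(\epsilon^2))$. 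Since all lifted vectors have unit norm, the normalizer $\max_w\max(\|\phi_L(w)\|^2, \|\phi_R(w)\|^2)$ in the similarity-robustness definition equals $1$, so this gap is also the robustness. Choosing $\epsilon^2 = \Theta(\delta)$ balances the two error terms and yields robustness $\Omega(\delta^2/D^2)$, which implies the advertised $\Omega(\delta^2/D^3)$ whenever $D \ge 1$ (which holds for any nontrivial $G$, since $\delta$-robustness forces the diameter to exceed the threshold).

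The hardest part is this uniform distortion bound: it must hold simultaneously across every pair $(u,v)$, and it is what forces the small-radius rescaling in Step 1 and, through the constraint $\epsilon^2 \ll \delta$, the final polynomial dependence on $D$. Once the lift is controlled, the conversion from distance to similarity is essentially a one-line application of the unit-sphere identity, and the extra dimension needed for the lift is harmless for the statement, which only concerns robustness.
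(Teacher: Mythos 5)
Your proof is correct, and while it follows the same high-level strategy as the paper's --- shrink the embedding so it fits in a tiny ball, push it onto the unit sphere in one extra dimension, and read off dot products via $a\cdot b = 1-\tfrac12\|a-b\|^2$ --- the execution differs in a way that matters. The paper maps each point $x$ to $(e+cx)/\sqrt{1+c^2\|x\|^2}$ (radial normalization after appending a new coordinate), which distorts distances in both directions; its error analysis then accumulates a $\Theta(D^4t^4)$ term from expanding $\tfrac12(\sqrt{1+D^2t^2}+1/\sqrt{1+D^2t^2})$, and the choice $t^2=\Theta(\delta/D^3)$ is made to tame it, yielding $\Omega(\delta^2/D^3)$. (In fact, with that choice the $D^4t^4$ term is $\Theta(\delta^2/D^2)$, which dominates the claimed main term when $D$ is large, so the paper's own bookkeeping is delicate at best.) Your vertical lift $x\mapsto(x,\sqrt{1-\|x\|^2})$ is monotone --- it never decreases distances --- so the non-edge lower bound $(1+\delta)(\epsilon/D)^2$ survives exactly and only the edge side picks up the one-sided $(1+O(\epsilon^2))$ factor, which your identity $\bigl(\sqrt{1-\|x\|^2}-\sqrt{1-\|y\|^2}\bigr)^2 = (\|x\|^2-\|y\|^2)^2/\bigl(\sqrt{1-\|x\|^2}+\sqrt{1-\|y\|^2}\bigr)^2$ together with $|\|x\|^2-\|y\|^2|\le 2\epsilon\|x-y\|$ controls cleanly. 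Balancing with $\epsilon^2=\Theta(\delta)$ gives $\Omega(\delta^2/D^2)$, which is strictly stronger than the stated bound and implies it since $D>1$ whenever $G$ has a non-edge (and the complete graph is trivial). The argument is complete: the rescaling/translation step is legitimate (both preserve all cross-distances and hence $\delta$-robustness), the lifted vectors are unit so the normalizer in the similarity-robustness definition is $1$, and placing the threshold at the edge cutoff gives the required biconditional with the full gap as robustness.
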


\begin{thm}
Let $G = (V,E)$ be a directed graph with a $\delta$-robust distance embedding $(\phi_{in}, \phi_{out}, t)$ into $\R^d$. Let $B$ be the largest length of the vectors $\phi_{in}, \phi_{out}$, and define the scaled diameter of the embedding as
$$ \Delta = \max \left(\frac{B}{t}, 1 \right) .$$
Then $G$ has a similarity embedding into $\R^{d+1}$ with robustness $\delta^2/(18 \Delta^4)$.
\label{thm:delta-dist-sim-2}
\end{thm}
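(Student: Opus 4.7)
The plan is to convert the distance embedding into a similarity embedding by lifting it to the unit sphere in $\R^{d+1}$. On the unit sphere the identity $\phi_L(u)\cdot\phi_R(v) = 1 - \tfrac{1}{2}\|\phi_L(u) - \phi_R(v)\|^2$ turns a distance gap directly into a similarity gap, so the task reduces to showing that the lift preserves $\delta$-robustness up to a small controllable perturbation. This is the same basic idea sketched in Theorem~\ref{thm:distance-dimension-spherical}, pushed quantitatively.

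First I would rescale the given distance embedding by $1/t$, so that without loss of generality $t=1$ and every embedded vector has norm at most $\Delta$; this preserves $\delta$-robustness. Next, for a shrinking factor $\beta > 0$ to be chosen, define
$$\phi_L(u) = \bigl(\beta\,\phi_{out}(u),\ \sqrt{1 - \beta^2\|\phi_{out}(u)\|^2}\bigr), \quad \phi_R(v) = \bigl(\beta\,\phi_{in}(v),\ \sqrt{1 - \beta^2\|\phi_{in}(v)\|^2}\bigr)$$
in $\R^{d+1}$. By construction $\|\phi_L(u)\|=\|\phi_R(v)\|=1$. A direct expansion gives
$$\|\phi_L(u) - \phi_R(v)\|^2 = \beta^2\,\|\phi_{out}(u) - \phi_{in}(v)\|^2 + \bigl(\sqrt{1-\beta^2\|\phi_{out}(u)\|^2} - \sqrt{1-\beta^2\|\phi_{in}(v)\|^2}\bigr)^2.$$
The second ``correction'' term can be rationalized into $(\beta^2(\|\phi_{in}(v)\|^2 - \|\phi_{out}(u)\|^2))^2/(r_u+s_v)^2$, which under the side condition $\beta^2 B^2 \le 1/2$ is at most $\beta^4 B^4/2$.

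Combining this bound with the $\delta$-robustness of the original embedding yields $\|\phi_L - \phi_R\|^2 \le \beta^2 + \beta^4 B^4/2$ for edges and $\|\phi_L - \phi_R\|^2 \ge \beta^2(1+\delta)$ for non-edges. Translating by the unit-sphere identity and choosing threshold $t' = 1 - \tfrac{1}{2}(\beta^2 + \beta^4 B^4/2)$, edges satisfy $\phi_L\cdot\phi_R \ge t'$ while non-edges satisfy $\phi_L\cdot\phi_R \le t' - (\beta^2\delta/2 - \beta^4 B^4/4)$. Since $\max\|\phi_L(w)\|^2 = \max\|\phi_R(w)\|^2 = 1$, the similarity robustness is exactly the gap $\beta^2\delta/2 - \beta^4 B^4/4$. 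Plugging in $\beta^2 = \delta/(8\Delta^4)$ and using $B \le \Delta$ gives at least $15\delta^2/(256\Delta^4) > \delta^2/(18\Delta^4)$.

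The main obstacle is the simultaneous balance of two constraints on $\beta$: it must be small enough that $\beta^2 B^2 \le 1/2$ (so the square roots in the lift are well-defined and the correction term admits the clean bound $\beta^4 B^4/2$), and yet large enough that the ``signal'' $\beta^2\delta$ in the distance gap dominates the ``noise'' $\beta^4 B^4$. The key observation making both work with the choice $\beta^2 = \delta/(8\Delta^4)$ is the triangle-inequality bound $\delta \le 4\Delta^2 - 1$, valid whenever at least one non-edge exists: for any non-edge $(u,v)$, $t^2(1+\delta) \le \|\phi_{out}(u) - \phi_{in}(v)\|^2 \le (2B)^2$, hence $1+\delta \le 4(B/t)^2 \le 4\Delta^2$. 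This bound on $\delta$ is precisely what forces $\beta^2 B^2 = \delta B^2/(8\Delta^4) \le \delta/(8\Delta^2) \le 1/2$, reconciling the two constraints.
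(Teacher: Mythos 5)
Your proof is correct, and it follows the same overall strategy as the paper's (rescale so $t=1$, shrink, lift onto the unit sphere in $\R^{d+1}$, and convert the distance gap into a dot-product gap via $x\cdot y = 1-\tfrac12\|x-y\|^2$), but the lift itself is genuinely different. The paper appends a unit coordinate and normalizes, $\phi_R'(v) = (e+c\,\phi_{in}(v))/\sqrt{1+c^2\|\phi_{in}(v)\|^2}$, so every coordinate of every point is perturbed and the error analysis goes through bounding the ratio terms $\sqrt{(1+c^2\|\phi_{out}(u)\|^2)/(1+c^2\|\phi_{in}(v)\|^2)}$ with inequalities like $(1+x)+1/(1+x)\le 2+2x^2$. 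Your vertical lift $(\beta\phi, \sqrt{1-\beta^2\|\phi\|^2})$ leaves the first $d$ coordinates intact up to the uniform scaling $\beta$, so the entire error is isolated in the single last coordinate, and the rationalization $(r_u-s_v)^2 = \beta^4(\|\phi_{in}(v)\|^2-\|\phi_{out}(u)\|^2)^2/(r_u+s_v)^2 \le \beta^4\Delta^4/2$ is arguably cleaner; moreover the correction term is nonnegative, so you get the non-edge lower bound for free. The price is the well-definedness constraint $\beta^2\Delta^2\le 1/2$, which the paper's map does not need, and your resolution via the a priori bound $1+\delta\le 4\Delta^2$ (triangle inequality on any non-edge, with the complete graph handled as a vacuous case) is a nice observation absent from the paper. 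Your constants check out: the gap is $\tfrac{15}{256}\,\delta^2/\Delta^4 > \delta^2/(18\Delta^4)$. One small notational caution: after rescaling by $1/t$ the relevant norm bound is $B/t\le\Delta$, not $B$ itself, so the symbol $B$ in your side condition and in the bound $\beta^4B^4/2$ should be read as the post-rescaling norm bound (or replaced by $\Delta$ throughout); with that reading everything is consistent.
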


We also find a relationship in the other direction.

\begin{thm} \label{thm:robust_sim_dist}
Let $G$ be a graph that has a $\delta$-robust similarity embedding. Then $G$ has distance-robustness at least $\delta/2$. 
\label{thm:delta-sim-dist}
\end{thm}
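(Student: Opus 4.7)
The natural plan is to apply the standard \emph{lift to common norm} trick, turning the similarity condition into a distance condition by padding the vectors so they all have the same length. Starting from a $\delta$-robust similarity embedding $(\phi_L, \phi_R, t)$, I would set $M = \max_w \max(\|\phi_L(w)\|^2, \|\phi_R(w)\|^2)$ and define
\[
\phi_{out}(u) = \bigl(\phi_L(u),\; \sqrt{M - \|\phi_L(u)\|^2},\; 0\bigr), \qquad
\phi_{in}(v) = \bigl(\phi_R(v),\; 0,\; \sqrt{M - \|\phi_R(v)\|^2}\bigr)
\]
in $\R^{d+2}$. Two padding coordinates placed in orthogonal slots guarantee that (i) every image vector has squared norm exactly $M$, and (ii) cross dot products are preserved: $\phi_{out}(u) \cdot \phi_{in}(v) = \phi_L(u) \cdot \phi_R(v)$. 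The polarization identity then gives
\[
\|\phi_{out}(u) - \phi_{in}(v)\|^2 \;=\; 2M - 2\,\phi_L(u) \cdot \phi_R(v).
\]

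Next, I would take the distance threshold to be $t' = \sqrt{2M - 2t}$. Before doing so, the degenerate ranges of $t$ must be handled: $t > M$ forces $E = \emptyset$ (trivial), and $t < -M$ is incompatible with the $\delta$-robustness condition on any non-edge, since dot products of vectors of squared norm at most $M$ are bounded below by $-M$. So $-M \leq t \leq M$ and $t'$ is a real, nonnegative threshold. Then for $(u,v) \in E$, $\phi_L(u) \cdot \phi_R(v) \geq t$ yields $\|\phi_{out}(u) - \phi_{in}(v)\|^2 \leq t'^2$; for $(u,v) \notin E$, $\delta$-robustness of the similarity embedding yields $\phi_L(u) \cdot \phi_R(v) \leq t - \delta M$ and hence $\|\phi_{out}(u) - \phi_{in}(v)\|^2 \geq t'^2 + 2\delta M$. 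This already establishes that the construction is a valid distance embedding.

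To extract the robustness parameter, the inequality $t'^2 (1 + \delta') \leq t'^2 + 2\delta M$ must hold, i.e., $\delta' \leq 2\delta M / t'^2 = \delta M / (M - t)$. Since $t \geq -M$, we have $M - t \leq 2M$, so the choice $\delta' = \delta/2$ always works, giving the claimed $(\delta/2)$-robust distance embedding.

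The only real subtlety is the boundary analysis on $t$: making sure $t'$ is real and that the bound $\delta M/(M-t) \geq \delta/2$ is not lost in the worst case $t = -M$. Once the two extra coordinates are placed orthogonally, the remainder is a direct substitution into the polarization identity, so I expect no further obstacle.
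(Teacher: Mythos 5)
Your proof is correct and follows essentially the same route as the paper's: pad each side with its own orthogonal coordinate to equalize all norms while preserving cross dot products, then apply polarization and bound the multiplicative gap by $\delta M/(M-t) \geq \delta/2$ using $t \geq -M$. The paper simply rescales so $M=1$ first; your handling of the degenerate threshold ranges is a minor (and welcome) extra care.
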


\subsection{Embeddings into the Hamming cube}

We now show that any graph that has a $\delta$-robust similarity embedding (into any dimension) can be embedded robustly into the $O((1/\delta^2) \log n)$-dimensional Hamming cube. Thus this notion of robustness translates directly into a bound on the number of {\it bits} of precision needed for embedding. 
\begin{thm}
Suppose directed graph $G = (V,E)$ has a $\delta$-robust similarity embedding into the unit sphere. Then it has an $O(\delta)$-robust distance embedding into $\{0,1\}^k$, where $k = O((1/\delta^2) \log n)$, that is simultaneously an $O(\delta)$-robust similarity embedding.
\label{thm:hamming}
\end{thm}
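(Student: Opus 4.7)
The plan is to apply Goemans--Williamson random hyperplane rounding to the given unit-sphere similarity embedding. Starting from the $\delta$-robust similarity embedding $\phi_L, \phi_R: V \to S^{d-1}$ with threshold $t$, the robustness condition (using $\|\phi_L(w)\|^2 = \|\phi_R(w)\|^2 = 1$) reads $(u,v)\in E \Rightarrow \phi_L(u)\cdot \phi_R(v) \geq t$ and $(u,v)\notin E \Rightarrow \phi_L(u)\cdot \phi_R(v) \leq t-\delta$. Sample $k$ i.i.d.\ standard Gaussian vectors $r_1,\ldots,r_k \in \R^d$, and define $\psi_L(u)_i := \mathbf{1}[r_i \cdot \phi_L(u) \geq 0]$ and $\psi_R(v)_i := \mathbf{1}[r_i \cdot \phi_R(v) \geq 0]$. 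These are the candidate binary embeddings into $\{0,1\}^k$.

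The analysis hinges on two standard identities for random hyperplane rounding. For unit vectors $u, v$ at angle $\theta$, each rounded coordinate satisfies $\pr[\psi_L(u)_i \neq \psi_R(v)_i] = \theta/\pi$ and $\pr[\psi_L(u)_i = \psi_R(v)_i = 1] = 1/2 - \theta/(2\pi)$. Consequently $\E\|\psi_L(u) - \psi_R(v)\|^2 = k\theta/\pi$ (squared Euclidean equals Hamming for binary vectors) and $\E[\psi_L(u) \cdot \psi_R(v)] = k(1/2 - \theta/(2\pi))$; both are strictly monotone in $\theta$. The $\delta$-gap in dot products translates into an angular gap of at least $\delta$, since $\arccos(t-\delta) - \arccos(t) \geq \delta$ using $|d\arccos/dx| = 1/\sqrt{1-x^2} \geq 1$ on $[-1,1]$.

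Next I apply Hoeffding's inequality to each of the two $k$-term sums of bounded independent variables, together with a union bound over all $n^2$ ordered pairs. Taking $k = C(\log n)/\delta^2$ for a sufficiently large constant $C$ ensures, with positive probability (so a deterministic choice of $r_1,\ldots,r_k$ exists), that for every pair $(u,v)$ both $\|\psi_L(u)-\psi_R(v)\|^2$ and $\psi_L(u) \cdot \psi_R(v)$ lie within $k\delta/100$ of their expectations. Letting $\theta_0 = \arccos t$, I set the distance threshold $T$ by $T^2 = k\theta_0/\pi + k\delta/(10\pi)$ and pick the similarity threshold $T'$ analogously in the middle of the dot-product gap; this puts edges and non-edges on opposite sides of both thresholds, with absolute separation $\Omega(k\delta)$ in each case.

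It remains to convert these gaps into the two different notions of robustness. For the distance embedding, $T^2 \leq k$, so the multiplicative gap is at least $\Omega(k\delta)/T^2 = \Omega(\delta)$. For the similarity embedding, every vector in $\{0,1\}^k$ has squared norm at most $k$, so the additive robustness requirement $\psi_L(u) \cdot \psi_R(v) \leq T' - \delta' k$ for non-edges is met with $\delta' = \Omega(\delta)$. Thus the same binary embedding is simultaneously $\Omega(\delta)$-robust in both senses. I expect the main obstacle to be the bookkeeping in this final step: arranging that both the multiplicative distance-robustness and the additive similarity-robustness hold for a single choice of rounded vectors with universal constants, and handling the degenerate regimes where $t \to \pm 1$ (so $\theta_0$ approaches $0$ or $\pi$) without the constants collapsing.
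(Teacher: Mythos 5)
Your proposal is correct and follows essentially the same route as the paper's proof: Goemans--Williamson random hyperplane rounding of the spherical similarity embedding, the $\theta/\pi$ collision identity, the $\arccos$ gap of at least $\delta/\pi$ via the derivative bound, and a Chernoff--Hoeffding union bound with $k = O((\log n)/\delta^2)$. The only (harmless) difference is at the end: you control the dot products of the binary vectors directly via $\pr[\psi_L(u)_i = \psi_R(v)_i = 1] = 1/2 - \theta/(2\pi)$ and the deterministic bound $\|\psi\|^2 \leq k$, whereas the paper deduces the similarity-robustness from the Hamming gap together with concentration of the norms around $k/2$.
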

A proof is found in the appendix.

Notice that by combining Theorems~\ref{thm:distance-embedding-existence} and \ref{thm:hamming}, we see that any directed graph whose indegrees and outdegrees are bounded by $\Delta$ has both distance and similarity embeddings into the $O((1/\Delta^2) \log n)$-dimensional Hamming cube that are $O(1/\Delta)$-robust.

A partial converse is immediate: any distance or similarity embedding into $\{0,1\}^k$ is necessarily at least $(1/k)$-robust. Thus, robustness can serve as an approximate proxy for dimension. 

On the other hand, it is unclear whether embeddability in low-dimensional Euclidean space necessarily implies the existence of  a robust embedding. 
\begin{open}
Does the existence of a low-dimensional embedding imply that there also exists a robust embedding?
\end{open}

\section{Lower bounds}

\subsection{Sign rank}

Our previous construction from the proof of Theorem \ref{thm:relating-dimensions} with $M-tJ$ yields a matrix in which positive elements correspond to edges in $G$, and negative elements correspond to non-edges. This reveals a natural relationship between finding similarity embeddings and finding low rank sign matrices of an adjacency graph. 

\begin{defn}
Given a matrix of $+$s and $-$s, the sign rank of the matrix is said to the minimum rank of a matrix over the reals such that every entry agrees in sign with the corresponding $+$ or $-$. We use the convention that $0$ is neither $+$ nor $-$ and consequently the minimum rank matrix must have all non-zero elements.
\end{defn}

A matrix of $+$'s and $-$'s can be naturally interpreted as a directed graph, with $M_{ij} = +$ corresponding to an edge and $M_{ij} = -$ corresponding to a non-edge. 

\begin{defn}
The sign rank of a graph $G$ is the minimum sign rank of a sign matrix $M$ such that $M_{ij} = +$ if and only if $(i,j) \in E$. 
\end{defn}

Using the same construction we used in Theorem \ref{thm:relating-dimensions} we find that $d_{sign}(G)$ is closely linked to our other notions of dimension.

\begin{thm}
For any graph $G$, we have $d_{sim}(G) \leq d_{sign}(G) \leq d_{sim}(G) + 1$.
\label{thm:sign-rank}
\end{thm}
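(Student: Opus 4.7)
The plan is to reuse the matrix construction from the proof of Theorem~\ref{thm:relating-dimensions}, which translates between similarity embeddings and low-rank matrices whose sign pattern encodes the edge set. Both inequalities will then follow, with the main subtlety being the convention that a sign-rank witness must have all nonzero entries.

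For the first inequality $d_{sim}(G) \leq d_{sign}(G)$, let $k = d_{sign}(G)$ and let $M \in \R^{n \times n}$ be a rank-$k$ matrix with every entry nonzero and with $M_{ij} > 0$ exactly when $(i,j) \in E$. I factor $M = U^T V$ with $U, V \in \R^{k \times n}$, and take $\phi_L(i)$ to be the $i$-th column of $U$ and $\phi_R(j)$ the $j$-th column of $V$. Then $\phi_L(i) \cdot \phi_R(j) = M_{ij}$, so with threshold $t = 0$ the inequality $\phi_L(i) \cdot \phi_R(j) \geq 0$ holds iff $M_{ij} > 0$ iff $(i,j) \in E$ (where the first ``iff'' uses that no entry vanishes). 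This gives a similarity embedding in $\R^k$.

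For the second inequality $d_{sign}(G) \leq d_{sim}(G) + 1$, I start with a similarity embedding $\phi_L, \phi_R: V \to \R^d$ with threshold $t$, and form the Gram-type matrix $M_{ij} = \phi_L(i) \cdot \phi_R(j)$. The matrix $M - tJ$, with $J$ the all-ones matrix, has rank at most $d + 1$, and its entries are $\geq 0$ precisely on edges. The obstacle is that some edge entries may equal zero exactly, which violates the sign-rank convention. I resolve this by nudging the threshold rather than the matrix: since the set of non-edges is finite and each satisfies $M_{ij} < t$ strictly, I set $t' = \tfrac{1}{2}\bigl(t + \max_{(i,j) \notin E} M_{ij}\bigr) < t$ (or simply pick any $t' < t$ if $G$ is complete). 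Every edge entry of $M - t'J$ is then strictly positive and every non-edge entry strictly negative, while the rank is still at most $d + 1$ because $J$ contributes rank one. So $d_{sign}(G) \leq d + 1 = d_{sim}(G) + 1$.

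The only real technical point is the boundary handling in the second step: a naive subtraction of $tJ$ can create spurious zeros, so one must perturb in a way that does not inflate the rank. Shifting the scalar threshold preserves the rank-one correction and does not disturb the strict separation of non-edges from edges, which is exactly what the sign-rank definition requires.
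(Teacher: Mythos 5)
Your proof is correct and follows essentially the same route the paper intends: the paper proves this theorem by invoking the $M - tJ$ construction from Theorem~\ref{thm:relating-dimensions} together with the rank factorization in the other direction, which is exactly what you do. Your explicit handling of the zero-entry boundary (shifting the threshold to $t' < t$ rather than perturbing the matrix) is a detail the paper glosses over, and it is handled correctly.
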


\subsection{Random graphs}

In this section, we show that random dense graphs have (with high probability) large embedding dimensions as well as low robust (with the former implying the latter). For convenience, we denote $d(G) = \min(d_{dist}(G), d_{sign}(G), d_{sim}(G)),$ and show that $d(G)$ is large for random graphs. We do this through a simple counting argument regarding the number of sign matrices of a given rank.

\begin{lemma}\label{lemma_alon}
(\cite{AMY16}) For $r \leq n/2$, the number of $n \times n$ sign matrices of sign rank at most $r$ does not exceed $2^{O(rn\log n)}$.
\end{lemma}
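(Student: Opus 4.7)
The plan is to parametrize the family of sign matrices of sign rank at most $r$ by a small number of real variables, and then invoke a classical bound from real algebraic geometry (Warren's theorem, or equivalently Milnor--Thom) on the number of sign patterns realizable by a system of real polynomials.

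First I would observe that an $n \times n$ sign matrix $M$ has sign rank at most $r$ if and only if there exist vectors $u_1, \ldots, u_n, v_1, \ldots, v_n \in \R^r$ such that $M_{ij} = \mathrm{sign}(u_i \cdot v_j)$ for all $i,j$. The entire family of such matrices is therefore the image of the map that sends the $2rn$ real parameters $(u_1,\ldots,u_n,v_1,\ldots,v_n) \in \R^{2rn}$ to the sign pattern of the $n^2$ bilinear polynomials $p_{ij}(u,v) = u_i \cdot v_j$. So counting sign matrices of sign rank $\leq r$ reduces to counting the number of distinct sign patterns achievable by this collection of $n^2$ polynomials of degree $2$ in $k := 2rn$ real variables.

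Next I would invoke Warren's theorem (or the Milnor--Thom bound): for $m$ real polynomials of degree at most $d$ in $k$ variables, the number of sign patterns they can assume (with no entry forced to be $0$) is at most $(O(md/k))^k$, provided $m \geq k$. Plugging in $m = n^2$, $d = 2$, and $k = 2rn$ gives an upper bound of
\[ \left( O\!\left(\frac{2 n^2}{2rn}\right) \right)^{2rn} = \left( O\!\left(\frac{n}{r}\right) \right)^{2rn}, \]
and the hypothesis $r \leq n/2$ ensures $m \geq k$ so Warren's theorem applies. Taking logarithms gives $\log_2 \#\{\text{sign matrices}\} \leq 2rn \cdot \log_2(O(n/r)) = O(rn \log n)$, which is exactly the claimed bound $2^{O(rn \log n)}$.

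The main obstacle, which is really more bookkeeping than genuine difficulty, is ensuring that Warren's theorem is applied in the correct regime ($m \geq k$, which uses $r \leq n/2$), and handling the degenerate sign patterns where some $u_i \cdot v_j = 0$. For the latter one can perturb each $p_{ij}$ by an arbitrarily small constant so that no entry is ever exactly zero on the parameter space of interest, which at worst doubles the count and is absorbed by the $O(\cdot)$. Everything else is a direct substitution into the standard real-algebraic-geometry counting bound.
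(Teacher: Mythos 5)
Your argument is correct, and it coincides with the proof of the result the paper simply cites from \cite{AMY16} (itself going back to Alon, Frankl, and R\"odl): parametrize rank-$\leq r$ realizations as $N_{ij} = u_i \cdot v_j$ with $u_i, v_j \in \R^r$, and apply Warren's theorem to the $n^2$ degree-$2$ polynomials in $2rn$ variables, using $r \leq n/2$ to be in the regime $m \geq k$ where the bound $(O(md/k))^k$ holds. One small simplification: the perturbation step is unnecessary, since the paper's definition of sign rank already requires the realizing matrix to have all entries nonzero, so the achievable sign matrices are exactly the strict sign patterns that Warren's theorem counts.
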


\begin{thm}
Let $G$ be a random directed graph over $n$ vertices such that each edge is chosen with constant probability $p$. Then as $n \to \infty$, with high probability, $$d(G) \geq O(\frac{nH(p)}{\log n}),$$ where $H(p) = -p\log p - (1-p)\log (1-p)$.
\end{thm}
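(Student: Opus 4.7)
The plan is to combine Lemma~\ref{lemma_alon} with a counting argument that pits the scarcity of graphs of small embedding dimension against the ``spread'' of the random-graph distribution. First, I would verify that the three dimension notions differ only by an additive constant, so that an upper bound on $d(G)$ yields an upper bound on sign rank: Theorems~\ref{thm:relating-dimensions} and~\ref{thm:sign-rank} give $d_{sign}(G) \leq d(G) + 2$, and since each directed graph corresponds bijectively to a $\pm$ matrix (with $+$ for edges, $-$ for non-edges), Lemma~\ref{lemma_alon} then produces the cardinality bound
$$\bigl|\{G : d(G) \leq r\}\bigr| \;\leq\; 2^{O(rn\log n)}.$$

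Next, I would control the pointwise probability $\Pr[G]$. Writing $m = |E(G)|$, one has $\log \Pr[G] = -n^2 H(p) + (m - pn^2)\log(p/(1-p))$, so the log-probability is an affine function of the number of edges. A Chernoff bound gives $|m - pn^2| \leq \epsilon n^2$ with probability $1 - o(1)$, and on this event
$$\Pr[G] \;\leq\; 2^{-n^2(H(p) - C_p \epsilon)}, \qquad C_p := \bigl|\log(p/(1-p))\bigr|.$$

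Combining the cardinality bound from the first step with the pointwise bound from the second, a union bound yields
$$\Pr[d(G) \leq r] \;\leq\; o(1) + 2^{O(rn\log n) - n^2(H(p) - C_p \epsilon)}.$$
Fixing $\epsilon$ small enough that $C_p \epsilon < H(p)/2$ and choosing $r = c \cdot nH(p)/\log n$ for a sufficiently small constant $c > 0$ drives the exponent to $-\Omega(n^2 H(p))$, so the right-hand side is $o(1)$ and therefore $d(G) \geq \Omega(nH(p)/\log n)$ with high probability, as claimed.

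The main delicate point is the reduction in the first step: one must track the small additive gaps among $d_{dist}$, $d_{sim}$, and $d_{sign}$ carefully enough to invoke Lemma~\ref{lemma_alon} at the correct rank. These gaps are all at most $2$, so they are absorbed into the $O(\cdot)$ in the counting bound, and the remainder of the argument is a standard Chernoff-plus-union-bound calculation once one observes that $\Pr[G]$ is a function of $|E(G)|$ alone.
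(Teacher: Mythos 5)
Your proposal is correct and follows essentially the same route as the paper: bound the number of graphs of low sign rank via Lemma~\ref{lemma_alon}, bound the pointwise probability of any ``typical'' graph by roughly $2^{-n^2 H(p)}$, and conclude by a union bound with $r = \Theta(nH(p)/\log n)$. Your explicit Chernoff bound on the edge count is just an unpacking of the paper's appeal to the typical set, and your careful tracking of the additive gaps among $d_{dist}$, $d_{sim}$, and $d_{sign}$ (so that $d(G) \leq r$ implies $d_{sign}(G) \leq r+2$) is a welcome clarification of a step the paper glosses over.
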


\begin{proof}
Each $n \times n$ sign matrix is in direct correspondence with a directed graph $G$ , and $d_{sign}(G)$ is the sign rank of the matrix. 

Consider a random graph drawn by selecting each edge independently with probability $p$. Fix any $\epsilon > 0$ and consider the typical set (\cite{CT06}) induced by these random graphs (denoted $T_\epsilon$). It follows that for sufficiently large $n$,  
\begin{enumerate}
	\item With probability $1 - \epsilon$, our random graph $G \in T_\epsilon$
	\item For any $G \in T_\epsilon$, $P(G) \leq 2^{-n^2(H(p) - \epsilon)}$
\end{enumerate}
By Lemma \ref{lemma_alon}, for any $r$, the maximum number of elements in $T_\epsilon$ that have sign rank at most $r$ is $2^{O(rn\log n)}$, and consequently our random graph $G$ has rank at least $r$ with probability at least $$P(d(G) \geq r) \geq 1 - \epsilon - 2^{O(rn\log n)}2^{-(n^2)(H(p) - \epsilon)}.$$ Selecting $r = \frac{CnH(p)}{\log n}$ for a sufficiently small constant $C$ finishes the proof. 
\end{proof}

Applying Theorems \ref{thm_JL_project}, \ref{thm:delta-sim-dist} then shows that random graphs have low robustness.

\begin{cor}
Let $G$ be a random directed graph over $n$ vertices such that each edge is chosen with constant probability $p$. Then $G$ has distance robustness and similarity robustness at most $O(\frac{\log n}{\sqrt{n}})$.
\label{cor:random-graphs}
\end{cor}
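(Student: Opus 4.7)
The plan is to combine the high-probability dimension lower bound from the preceding theorem with the dimension-vs-robustness conversions provided by Theorems~\ref{thm_JL_project} and~\ref{thm:delta-sim-dist}. Since $d(G) = \min(d_{dist}(G), d_{sign}(G), d_{sim}(G))$, the preceding theorem tells us that with high probability both $d_{dist}(G)$ and $d_{sim}(G)$ are at least $\Omega(nH(p)/\log n)$, which (since $p$ is constant, so $H(p)=\Theta(1)$) is $\Omega(n/\log n)$. The corollary will follow by contradicting this lower bound if the robustness were any larger than the claimed $O(\log n/\sqrt n)$.

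First I would handle distance robustness. Suppose, for contradiction, that $G$ admits a $\delta$-robust distance embedding in some (arbitrary) dimension. By Theorem~\ref{thm_JL_project}, a random Johnson--Lindenstrauss projection produces a $(\delta/2)$-robust distance embedding in dimension $m = O((\log n)/\delta^2)$. In particular, $d_{dist}(G) \leq O((\log n)/\delta^2)$. Combining with $d_{dist}(G) \geq \Omega(n/\log n)$ from the preceding theorem yields
\[
\frac{1}{\delta^2} \;\geq\; \Omega\!\left(\frac{n}{\log^2 n}\right),
\]
so $\delta \leq O(\log n/\sqrt{n})$. Hence the distance robustness of $G$ is at most $O(\log n/\sqrt n)$ with high probability.

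Next I would deduce the similarity bound via Theorem~\ref{thm:delta-sim-dist} (i.e.\ Theorem~\ref{thm:robust_sim_dist}). If $G$ has a $\delta_{sim}$-robust similarity embedding, then that theorem guarantees a distance embedding of robustness at least $\delta_{sim}/2$. Applying the distance bound just established, $\delta_{sim}/2 \leq O(\log n/\sqrt n)$, so $\delta_{sim} \leq O(\log n/\sqrt n)$ as well. This chains everything together, and the two claims of the corollary follow simultaneously.

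The main obstacle, such as it is, lies in bookkeeping rather than substance: one must make sure that the high-probability event in the preceding theorem is applied uniformly (over all potential embeddings, not per embedding), which is automatic because that event is a property of $G$ alone. A minor subtlety is that Theorem~\ref{thm_JL_project} requires $\delta$ to be a value for which a $\delta$-robust embedding exists, so the argument should be phrased as: ``let $\delta^\ast$ denote the distance robustness of $G$; then every $\delta < \delta^\ast$ satisfies $\delta \leq O(\log n/\sqrt n)$,'' from which the bound on $\delta^\ast$ itself follows by taking a supremum. No other step requires any new ideas beyond invoking the cited results.
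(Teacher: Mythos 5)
Your proposal is correct and follows exactly the route the paper intends: the paper gives no explicit proof beyond the remark that the corollary follows by ``applying Theorems~\ref{thm_JL_project} and \ref{thm:delta-sim-dist}'' to the dimension lower bound for random graphs, and your argument fills in precisely those details (JL projection to bound $d_{dist}(G)$ by $O((\log n)/\delta^2)$, contradiction with the $\Omega(n/\log n)$ lower bound, then the similarity-to-distance robustness conversion). No discrepancies to report.
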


\section{Algorithms}
\label{sec:algorithms}

We show in the appendix that computing $d_{dist}(G)$ and $d_{sim}(G)$ are both NP-hard problems. On the other hand, computing the robustness of a graph $G$ turns out to be far more tractable. 

We present a semidefinite programming approach to finding the distance-robustness and similarity-robustness of $G$. This can be used (see Theorems \ref{thm_JL_project} and \ref{thm:hamming}) to construct low dimensional robust embeddings of $G$. 

\subsection{Distance embeddings}

Given a graph $G$ with $V = \{v_1, v_2, \dots, v_n\}$, we find its distance robustness with a semidefinite program. For convenience, we will let $x_i$ denote $\phi_{out}(v_i)$ and $y_i$ denote $\phi_{in}(v_i)$. We also include a scalar variable $\delta$ which represents the robustness, and assume (without loss of generality) that our threshold $t = 1$. Then, the following semidefinite program suffices. 
\begin{equation*}
\begin{aligned}
 &\underset{x, y, t, \delta}{\text{maximize}}
& & \delta \\
& \text{subject to}
& & \langle x_i, x_i \rangle + \langle y_j, y_j \rangle - 2 \langle x_i, y_j \rangle \leq 1, \; (v_i, v_j) \in E \\
& & & \langle x_i, x_i \rangle + \langle y_j, y_j \rangle - 2 \langle x_i, y_j \rangle \geq 1 + \delta , \; (v_i, v_j) \notin E
\end{aligned}
\end{equation*}

\subsection{Similarity embeddings}

This similarity case is almost analogous, but has the detail that we restrict ourselves to unit vectors. This is still guaranteed to find the optimal robustness since any similarity embedding can be converted into a spherical embedding with the same robustness (albeit higher dimension).
\begin{equation*}
\begin{aligned}
 &\underset{x, y, t, \delta}{\text{maximize}}
& & \delta \\
& \text{subject to}
& &  \langle x_i, y_j \rangle \geq t, \; (v_i, v_j) \in E \\
& & & \langle x_i, y_j \rangle \leq t - \delta , \; (v_i, v_j) \notin E \\
& & & \langle x_i, x_i \rangle  = \langle y_i, y_i \rangle = 1, \; 1 \leq i \leq n
\end{aligned}
\end{equation*}

\section*{Acknowledgments}

We thank NSF under CNS 1804829 for research support.

\bibliography{refs}

\appendix

\section{Proofs to Selected Theorems}

\subsection{Proof of Corollary~\ref{cor:robustness-degree}}

\begin{proof}
This follows immediately from Theorem~\ref{thm:distance-embedding-existence} because the largest singular value of the adjacency matrix will be at most $\sqrt{\Delta_+\Delta_-}$. This is doubtless a well-known fact, but for completeness we give a brief explanation here.

The top singular value $\sigma_1$ is the square root of the top eigenvalue of $A^T A$, call it $\lambda$. Let $v$ be the corresponding eigenvector. Since $A^TA$ has no negative entries, we may assume $v \geq 0$ (flipping every entry of $v$ to its absolute value can only increase $v^T A^TA v$). If $v_i$ is the largest entry of $v$,
$$\lambda v_i = (A^T A v)_i = \sum_{j=1}^n (A^TA)_{ij} v_j \leq v_i \sum_j (A^TA)_{ij} = v_i \bigg(\sum_{\ell} A_{\ell i} \bigg(\sum_{j} A_{\ell j} \bigg)\bigg) \leq v_i \Delta_- \Delta_+ .$$ 
Thus $\lambda \leq \Delta_+\Delta_-$ and $\sigma_1 \leq \sqrt{\Delta_+\Delta_-}$.
\end{proof}

\subsection{Proof of Theorem~\ref{undir_diff_thm}}

\begin{proof}
It is known that $G$ has sphericity $O(n)$ (\cite{M84}). To embed $\dG$, let $A, B$ be its partitioning into independent sets. Then for any $a \in A$, $\phi_{out}(a) = -1, \phi_{in}(a) = 1$. For any $b \in B$, $\phi_{out}(b) = 1, \phi_{in}(b) = -1$. This embedding $\phi$, with $t=0$, is a distance embedding into $\R$.
\end{proof}

\subsection{Proof of Theorem~\ref{thm:delta-dist-sim}}

\begin{proof}
Recall our previous method (Theorem~\ref{thm:distance-dimension-spherical}) of placing distance embeddings on the unit sphere by mapping them onto a small neighborhood of the sphere. This method no longer suffices since picking too small a neighborhood would lead to very small robustness for the resulting similarity embedding. 

Conversely, trying to simply scale into a larger neighborhood can possibly distort distances enough to make the embedding no longer valid. As a result, we need to find an ``optimal'' neighborhood.

Let $e$ be a unit vector orthogonal to all vectors in $\phi$ (i.e. a new dimension). Let $\phi'$ be defined as the embedding such that $$\phi_{R}'(v) = \frac{e + \phi_{in}(v)}{\sqrt{1 + \|\phi_{in}(v)\|^2}},$$ $$\phi_{L}'(v) = \frac{e + \phi_{out}(v)}{\sqrt{1 + \|\phi_{out}(v)\|^2}}.$$ It follows that for any vertices $u,v$, we have $$\langle \phi'_{L}(u), \phi'_{R}(v) \rangle = \frac{2 + \|\phi_{out}(u)\|^2 + \|\phi_{in}(v)\|^2 - \|\phi_{out}(u) - \phi_{in}(v)\|^2}{2\sqrt{(1 + \|\phi_{out}(u)\|^2)(1 + \|\phi_{in}(v)\|^2)}}.$$
We now bound this quantity in the cases that $(u,v) \in E, (u,v) \notin E$. In doing so, we will show that $\phi'$ is a similarity embedding, into the unit sphere, of the desired robustness.

We will make repeated use of the following facts.
\begin{enumerate}
	\item $(u,v) \notin E$ if and only if $\|\phi_{out}(u) - \phi_{in}(v)\|^2 \geq (1+\delta)t^2$.
	\item $(u,v) \in E$ if and only if $\|\phi_{out}(u) - \phi_{in}(v)\|^2 \leq t$.
	\item Without loss of generality, let the origin be $\phi_{out}(u)$ for some arbitrary vertex $u$. Then all $\|\phi_{in}(u)\|$, $\|\phi_{out}(u)\|$ are $\leq Dt$.
\end{enumerate}

Suppose $(u,v) \in E$. Then 
\begin{equation*}
\begin{split}
\langle \phi'_{L}(u), \phi'_{R}(v) \rangle &= \frac{2 + \|\phi_{out}(u)\|^2 + \|\phi_{in}(v)\|^2 - \|\phi_{out}(u) - \phi_{in}(v)\|^2}{2\sqrt{(1 + \|\phi_{out}(u)\|^2)(1 + \|\phi_{in}(v)\|^2)}} \\
&\geq \frac{1}{2} \Big (\sqrt{\frac{1 + \|\phi_{out}(u)\|^2}{1 + \|\phi_{in}(v)\|^2}} + \sqrt{\frac{1 + \|\phi_{in}(v)\|^2}{1 + \|\phi_{out}(u)\|^2}} \Big ) - \frac{t^2}{2} \\
&\geq 1 - \frac{t^2}{2}.
\end{split}
\end{equation*}

Suppose $(u,v) \notin E$. Then 
\begin{equation*}
\begin{split}
\langle \phi'_{L}(u), \phi'_{R}(v) \rangle &= \frac{2 + \|\phi_{out}(u)\|^2 + \|\phi_{in}(v)\|^2 - \|\phi_{out}(u) - \phi_{in}(v)\|^2}{2\sqrt{(1 + \|\phi_{out}(u)\|^2)(1 + \|\phi_{in}(v)\|^2)}} \\
&\leq \frac{1}{2} \Big (\sqrt{\frac{1 + \|\phi_{out}(u)\|^2}{1 + \|\phi_{in}(v)\|^2}} + \sqrt{\frac{1 + \|\phi_{in}(v)\|^2}{1 + \|\phi_{out}(u)\|^2}} \Big ) - \frac{t^2(1+\delta)}{2(1 + D^2t^2)} \\
&\leq  \frac{1}{2} \Big (\sqrt{1 + D^2t^2} + \sqrt{\frac{1}{1 + D^2t^2}} \Big ) - \frac{t^2(1+\delta)}{2(1 + D^2t^2)} \\
&\leq 1 + \frac{3D^4t^4}{16} - \frac{t^2}{2}(1+\delta)(1 - D^2t^2)
\end{split}
\end{equation*}

Since $\phi'$ is clearly an embedding into the unit sphere, its similarity-robustness is simply the minimum difference between an edge ``dot product'' and a non-edge ``dot product''. Therefore, $\phi'$ must have similarity-robustness at least 
\begin{equation*}
\begin{split}
1 - \frac{t^2}{2} - \Big (1 + \frac{3D^4t^4}{16} - \frac{t^2}{2}(1+\delta)(1 - D^2t^2) \Big ) \\ = \frac{t^2}{2}(\delta  - D^2t^2 - \delta D^2t^2) - \frac{3D^4t^4}{16}.
\end{split}
\end{equation*}
The key idea is that we can scale $\phi$ as we like, which means that we can select $t$ to be any value we choose. Thus, selecting $t = O(\sqrt{\frac{\delta}{D^3}})$, we have,
\begin{equation*}
\begin{split}
&= \frac{O(\delta)}{2D^3}(\delta - O(\frac{\delta}{D}) - O(\frac{\delta^2}{D})) - O(\frac{\delta^2}{D^2}) \\
&= O(\frac{\delta^2}{D^3}), \text{ as }\frac{\delta}{D} \to 0.
\end{split}
\end{equation*}

\end{proof}

\subsection{Proof of Theorem~\ref{thm:delta-dist-sim-2}}

\begin{proof}
Recall our earlier idea, in the proof sketch for Theorem~\ref{thm:distance-dimension-spherical}, of placing distance embeddings on the unit sphere by mapping them onto a small neighborhood of the sphere. We will now look at a particular realization of this method.

The distance embedding given by $\phi_{in}, \phi_{out}: V \rightarrow \R^d$ can be scaled so that $t=1$. We then get the following, for all $u,v \in V$.
\begin{enumerate}
	\item[(a)] $(u,v) \notin E$ if and only if $\|\phi_{out}(u) - \phi_{in}(v)\|^2 \geq 1+\delta$.
	\item[(b)] $(u,v) \in E$ if and only if $\|\phi_{out}(u) - \phi_{in}(v)\|^2 \leq 1$.
	\item[(c)] All $\|\phi_{in}(u)\|$, $\|\phi_{out}(u)\|$ are $\leq \Delta$.
\end{enumerate}

Let $e$ be a unit vector orthogonal to all embedded vectors (i.e. a new dimension). For some constant $c > 0$ whose value we will later set, let $\phi_L', \phi_R': V \rightarrow \R^{d+1}$ be defined by \begin{align*}
\phi_{R}'(v) &= \frac{e + c \, \phi_{in}(v)}{\sqrt{1 + c^2\|\phi_{in}(v)\|^2}} \\
\phi_{L}'(v) &= \frac{e + c \, \phi_{out}(v)}{\sqrt{1 + c^2 \|\phi_{out}(v)\|^2}}
\end{align*}
Notice that these vectors have unit length. It follows that for any $u,v$, 
\begin{align*}
\langle \phi'_{L}(u), \phi'_{R}(v) \rangle 
&= \frac{1 + c^2 \langle \phi_{out}(u), \phi_{in}(v) \rangle}{\sqrt{(1 + c^2 \|\phi_{out}(u)\|^2)(1 + c^2 \|\phi_{in}(v)\|^2)}} \\
&= \frac{(1 + c^2 \|\phi_{out}(u)\|^2) + (1 + c^2 \|\phi_{in}(v)\|^2) - c^2 \|\phi_{out}(u) - \phi_{in}(v)\|^2}{2\sqrt{(1 + c^2 \|\phi_{out}(u)\|^2)(1 + c^2 \|\phi_{in}(v)\|^2)}} 
\end{align*}

We now bound this quantity in the cases that $(u,v) \in E$ and $(u,v) \notin E$. In doing so, we will show that $\phi'$ is a similarity embedding, into the unit sphere, of the desired robustness.

Suppose $(u,v) \in E$. Using the inequality $A + B \geq 2 \sqrt{AB}$ for $A,B \geq 0$ as well as property (b), we have
\begin{align*}
\langle \phi'_{L}(u), \phi'_{R}(v) \rangle 
&\geq 1 - \frac{c^2}{2} \cdot \frac{\|\phi_{out}(u) - \phi_{in}(v)\|^2}{\sqrt{(1 + c^2 \|\phi_{out}(u)\|^2)(1 + c^2 \|\phi_{in}(v)\|^2)}} \\
&\geq 1 - \frac{c^2}{2}.
\end{align*}

On the other hand, if $(u,v) \notin E$, then by properties (a) and (c),
\begin{align*}
\langle \phi'_{L}(u), \phi'_{R}(v) \rangle 
&\leq \frac{1}{2} \left(\sqrt{\frac{1 + c^2 \|\phi_{out}(u)\|^2}{1 + c^2 \|\phi_{in}(v)\|^2}} + \sqrt{\frac{1 + c^2\|\phi_{in}(v)\|^2}{1 + c^2\|\phi_{out}(u)\|^2}} \right) - \frac{c^2}{2} \cdot \frac{1+\delta}{1 + c^2 \Delta^2} .
\end{align*}
We can simplify the first term using the inequalities $(1+x) + 1/(1+x) \leq 2 + 2x^2$ and $\sqrt{1+x} \leq 1 + x/2$ for $x \geq 0$. Again using property (c), we get
\begin{align*}
\langle \phi'_{L}(u), \phi'_{R}(v) \rangle 
&\leq 1 + \frac{c^4 \Delta^4}{4} - \frac{c^2}{2} \cdot \frac{1+\delta}{1 + c^2 \Delta^2} .
\end{align*}
Set $c = \sqrt{\delta/(3 \Delta^4)}$. Then $c^2 \Delta^2 \leq \delta/3$ and we get
$$
\langle \phi'_{L}(u), \phi'_{R}(v) \rangle 
\ \leq \ 
1 - \frac{c^2}{2} \left( 1 + \frac{\delta}{2} - \frac{c^2 \Delta^4}{2} \right) 
\ = \ 
1 - \frac{c^2}{2} \left( 1 + \frac{\delta}{3} \right) .
$$
The robustness of a similarity embedding is measured additively, and follows by taking the difference of the expressions for the cases when $(u,v) \in E$ and $(u,v) \not\in E$.
\end{proof}

\subsection{Proof of Theorem~\ref{thm:robust_sim_dist}}

\begin{proof}
 Let $\phi$ be a $\delta$-robust similarity embedding of $G$. Rescale the embedding so that $\phi_{L,R}(v)$ all have norm at most $1$. Then it follows that for some $t \in [-1,1]$,
\begin{enumerate}
	\item $(u,v) \in E$ if and only if $\langle \phi_{L}(u), \phi_{R}(v) \rangle \geq t$.
	\item $(u,v) \notin E$, if and only if $\langle \phi_{L}(u), \phi_{R}(v) \rangle \leq t - \delta$.
\end{enumerate}
Next, we convert this embedding into a spherical embedding as follows. Let $e, f$ be unit vectors that are orthogonal to each other and to all vectors in our embedding. We append suitable multiples of $e$ to each $\phi_{L}(v)$ vector and of $f$ to each $\phi_{R}(v)$, so that the resulting vectors all have unit length. This operation preserves dot products and thus gives a spherical embedding $\phi'$ with robustness $\delta$.

Since $\phi'$ is spherical, it is also a distance embedding with $\phi'_{L,R} = \phi'_{out, in}$ where
\begin{enumerate}
	\item $(u,v) \in E$ if and only if $\|\phi'_{out}(u) - \phi'_{in}(v)\|^2 \leq 2 - 2t$.
	\item $(u,v) \notin E$ if and only if $\|\phi'_{out}(u) - \phi'_{in}(v)\|^2 \geq 2 - 2t + 2\delta$.
\end{enumerate}
From here, we can lower-bound the distance robustness of $\phi'$ by
$$ \frac{2 - 2t + 2\delta}{2 - 2t} - 1
\ \geq \ \frac{\delta}{2}.
$$
\end{proof}

\subsection{Proof of Theorem~\ref{thm:hamming}}

\begin{proof}
Write $n = |V|$. Suppose the $2n$ vectors $\phi_L(u)$, $\phi_R(u)$ lie on $S^{d-1}$, the unit sphere in $\R^d$, and constitute a $\delta$-robust similarity embedding: for some threshold $t$, and any $u, v$,
\begin{itemize}
\item $(u,v) \in E \implies \phi_L(u) \cdot \phi_R(v) \geq t + \delta$, and
\item $(u,v) \not\in E \implies \phi_L(u) \cdot \phi_R(v) \leq t$.
\end{itemize}
We will embed these vectors into the Hamming cube using the random halfspace method of \cite{GW95} and \cite{C02}. Specifically, pick $k$ vectors $r_1, \ldots, r_k$ uniformly at random from $S^{d-1}$, and define the embedding $h: \R^d \rightarrow \{0,1\}^k$ by $h(x) = (h_1(x), \ldots, h_k(x))$, where the $i$th hash function $h_i: \R^d \rightarrow \{0,1\}$ is
$$ h_i(x) 
= 
\left\{
\begin{array}{ll}
1 & \mbox{if $r_i \cdot x \geq 0$} \\
0 & \mbox{if $r_i \cdot x < 0$}
\end{array}
\right.
$$
Now, for any vectors $x,y$,
$$ \pr(h_i(x) \neq h_i(y)) = \pr((r_i \cdot x)(r_i \cdot y) \leq 0) = \frac{\theta}{\pi},$$
where $\theta$ is the angle between $x$ and $y$. Thus for nodes $u,v$ in $G$,
\begin{align}
\pr(h_i(\phi_L(u)) \neq h_i(\phi_R(v))) &= \arccos(\phi_L(u) \cdot \phi_R(v))/\pi \notag \\
&\left\{
\begin{array}{ll}
\leq \arccos(t+\delta)/\pi & \mbox{if $(u,v) \in E$} \\
\geq \arccos(t)/\pi & \mbox{if $(u,v) \not\in E$}
\end{array}
\right.
\label{eq:expected-hamming-dist}
\end{align}
The difference between the two options is:
$$ \frac{1}{\pi}\left( \arccos(t) - \arccos(t+\delta) \right) = \frac{1}{\pi} \int_{t}^{t+\delta} \frac{dz}{\sqrt{1-z^2}} \geq \frac{\delta}{\pi} .$$
Write $h_L(u) = h(\phi_L(u))$ and $h_R(u) = h(\phi_R(u))$. Letting $d(\cdot)$ denote Hamming distance in $\{0,1\}^k$, we have that the expected value of $d(h_L(u), h_R(v))$ is $k$ times the quantity in equation (\ref{eq:expected-hamming-dist}). A simple Chernoff-Hoeffding bound, unioned over all pairs $u,v$, then suffices to show that  if $k = O((1/\delta^2) \log n)$, then with probability at least $1-1/n$,
\begin{itemize}
\item $(u,v) \in E \implies d(h_L(u), h_R(v)) \leq k (\arccos(t) - 2\delta/3)/\pi$, and
\item $(u,v) \not\in E \implies d(h_L(u), h_R(v)) \geq k (\arccos(t) - \delta/3)/\pi$.
\end{itemize}
Thus $h_L, h_R$ constitute an $O(\delta)$-robust distance embedding. To see that this is also an $O(\delta)$-robust similarity embedding, notice that all the embedded vectors $h_L(u)$ and $h_R(u)$ have expected squared Euclidean norm $k/2$, and given the setting of $k$, these norms will be tightly concentrated, within multiplicative factor $1 \pm O(\delta)$, about their expected values.
\end{proof}

\section{NP hardness results}

In this section, we show that it is NP-hard to find distance or similarity embeddings of minimum dimension. We do so by adapting the results of \citet{KM12} on undirected embeddings to the directed case. First, we briefly review some definitions from their paper. Readers interested in further details should consult their very clear presentation. 

\begin{defn}
(\cite{KM12}) 
\begin{enumerate}
	\item An \emph{oriented $k$-hyperplane arrangement} $\mathcal{H} = \{h_1, h_2, \dots, h_n\}$ is a set of hyperplanes in $\R^k$ each of which is given an orientation, so that all points in $\R^k$ are either on the positive side of $h_i$, denoted $h_i^+$, or the negative side, denoted $h_i^-$, or on $h_i$ itself. 
	\item The \emph{sign vector} of a point $p \in \R^k$ with respect to $\mathcal{H}$ is the vector $\sigma(p) \in \{+,-,0\}^n$ such that
	$$ \sigma(p)_i
	= 
	\left\{
	\begin{array}{ll}
	+ & \mbox{if $p \in h_i^+$} \\
	- & \mbox{if $p \in h_i^-$} \\
	0 & \mbox{if $p \in h_i$}
	\end{array}
	\right.
	$$
	\item The \emph{combinatorial description} of $\mathcal{H}$ is defined to be the set of all sign vectors, $\D(\H) = \{\sigma(p): p \in \R^k\}$.
	\item Consider any set $S \subset \{-, +\}^n$ containing $(-,-,-,\dots,-)$ and $(+,+,\dots,+)$. We say $S$ is \emph{$k$-realizable} if there exists an oriented $k$-hyperplane arrangement $\mathcal{H}$ with $S \subset \D(\H)$. 
	\item $k$-{\sc realizability} denotes the algorithmic problem of deciding, given a set $S \subset \{-, +\}^n$ as input, whether $S$ is $k$-realizable. 
\end{enumerate}
\end{defn}

\begin{thm}
(\cite{KM12}) $k$-{\sc realizability} is NP-hard for all $k > 1$.
\end{thm}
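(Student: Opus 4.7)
The plan is to reduce from a known NP-hard problem in combinatorial geometry. The most natural candidate is the \emph{stretchability of pseudoline arrangements} in the plane, proved NP-hard by Shor (1991); equivalently, realizability of rank-$(k{+}1)$ oriented matroids, which is NP-hard for $k \geq 2$ by the work of Mnëv, Shor, and others.

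For $k = 2$, the reduction is almost tautological. Given a pseudoline arrangement with $n$ pseudolines, in polynomial time extract the set $S \subseteq \{-,+\}^n$ of sign vectors labeling its cells (augmenting with the all-$-$ and all-$+$ vectors as required by the definition). By construction, a $2$-hyperplane arrangement $\H$ with $S \subset \D(\H)$ is exactly a straight-line realization witnessing stretchability of the pseudoline arrangement, and vice versa. Hence $2$-\textsc{realizability} inherits NP-hardness.

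For $k > 2$, the first step is a padding reduction from the $k=2$ case. The idea is to extend each sign vector in $S$ to length $n + m$ by appending $m = O(k)$ coordinates corresponding to auxiliary hyperplanes, and then include in the input a carefully chosen family of additional sign vectors whose joint realizability forces the $n$ original hyperplanes to lie within a common $2$-dimensional affine subspace of $\R^k$. Intuitively, the auxiliary hyperplanes ``use up'' the extra $k-2$ dimensions so that the original $n$ hyperplanes have only $2$ effective dimensions of freedom; the constructed instance is then $k$-realizable iff the original instance is $2$-realizable.

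The main obstacle will be making the padding rigorous: one must prove that no $k$-dimensional realization can ``cheat'' by exploiting the extra dimensions to realize sign patterns that were infeasible in the plane. Concretely, the extra sign vectors must be rich enough to pin down the relative affine positions of the auxiliary hyperplanes, but sparse enough that the reduction remains polynomial. If this direct padding proves awkward, a cleaner route is to reduce from rank-$(k{+}1)$ oriented matroid realizability (known NP-hard for every $k \geq 2$), translating the chirotope data directly into a target sign-vector set $S$; the advantage is that the rank constraint, and hence the dimension, is built in at the source, eliminating the need for a padding argument.
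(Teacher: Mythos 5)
This theorem is not proved in the paper at all: it is quoted verbatim from \citet{KM12} and used as a black box for the NP-hardness reductions in Appendix B. So there is no in-paper argument to compare yours against; what follows assesses your sketch on its own terms (it is, in spirit, the route Kang and M\"uller themselves take, via stretchability of pseudoline arrangements, which is NP-hard by Shor's refinement of Mn\"ev's universality theorem).

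Your $k=2$ case is essentially right but hides two details you should make explicit. First, the definition of $k$-realizability demands that $S$ contain the all-$+$ and all-$-$ vectors, and a given pseudoline arrangement need not have cells with those sign vectors; you must argue (e.g., that unbounded cells come in antipodal pairs, so a global reorientation makes one such pair the all-$+$ and all-$-$ cells) that this can always be arranged without changing stretchability. Second, since realizability only requires $S \subset \D(\H)$ rather than equality, you need the counting/rigidity step: taking $S$ to be all $1+n+\binom{n}{2}$ tope vectors of a simple pseudoline arrangement forces any realizing line arrangement to be simple with exactly those topes, and topes determine the combinatorial type. The more serious gap is the $k>2$ padding. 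Forcing the $n$ original hyperplanes to be ``effectively $2$-dimensional'' means forcing their normals into a common $2$-dimensional subspace, which is a closed, measure-zero condition; membership of finitely many sign vectors in $\D(\H)$ is an open condition, stable under perturbation, so no choice of auxiliary sign vectors can enforce such a degeneracy. Any purported padded instance admits realizations in which the original normals span more than two dimensions, and then $\D(\H)$ restricted to the first $n$ coordinates can strictly exceed what any planar arrangement achieves, breaking the backward direction of your reduction. Your fallback --- reducing from realizability of uniform rank-$(k+1)$ oriented matroids, NP-hard for every $k \geq 2$ --- is the correct fix precisely because the dimension bound is imposed at the source rather than by degeneracy constraints, but as written it is a pointer, not a proof: you would still need the tope-count rigidity argument in rank $k+1$ and the passage from central arrangements in $\R^{k+1}$ to affine arrangements in $\R^k$ (accounting for the topes ``at infinity'').
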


\subsection{Distance embeddings}

The main idea is to reduce from $k$-{\sc realizability}. Given a set $S \subset \{-, +\}^n$, we will construct a graph $G(S)$ in polynomial time,  such that $S$ is $k$-realizable if and only if $G(S)$ has a $k$-dimensional distance embedding. For convenience, we start by presenting the construction from \citet{KM12}. 

\begin{defn}
(\cite{KM12}) For any $S \subset \{-, +\}^n$, define $G_U(S) = (V,E)$ to be the undirected graph with vertices
$$V = \{a_1, a_2, \dots a_n\} \cup  \{b_1, b_2, \dots b_n\} \cup  \{c_\sigma: \sigma \in S \}$$
and edges
\begin{itemize}
	\item $\{c_\sigma, c_\pi\} \in E$ for all $\sigma, \pi \in S$
	\item $\{a_i, a_j\}, \{b_i, b_j\} \in E$ for all $i \neq j$
	\item $\{a_i, c_{\sigma}\} \in E$ if and only if $\sigma_i = +$
	\item $\{b_i, c_\sigma\} \in E$ if and only if $\sigma_i = -$.
\end{itemize}
\end{defn}

\begin{thm}\label{thm_km_dist}
(\cite{KM12}) $S$ is $k$-realizable if and only if $G_U(S)$ has a $k$-dimensional undirected distance embedding. 
\end{thm}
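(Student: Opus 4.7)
The plan is to prove both directions: from an embedding, extract a hyperplane arrangement; from an arrangement, construct an embedding. The first direction is short, while the second carries the bulk of the work.

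For the embedding-to-arrangement direction, suppose $\phi: V(G_U(S)) \to \R^k$ with threshold $t$ realizes $G_U(S)$. For each $i$, I would define $h_i$ to be the perpendicular bisector of the segment from $\phi(a_i)$ to $\phi(b_i)$, oriented so that $\phi(a_i)$ lies on the positive side $h_i^+$. The key observation is that $\{a_i, c_\sigma\}$ is an edge iff $\sigma_i = +$, and $\{b_i, c_\sigma\}$ is an edge iff $\sigma_i = -$, so these two edges are mutually exclusive. Combined with the threshold condition, exactly one of $\|\phi(a_i)-\phi(c_\sigma)\|$, $\|\phi(b_i)-\phi(c_\sigma)\|$ is at most $t$, which forces $\phi(c_\sigma)$ to be strictly closer to the vertex specified by $\sigma_i$. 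Hence $\phi(c_\sigma) \in h_i^{\sigma_i}$ for every $i$, so $\sigma(\phi(c_\sigma)) = \sigma$ and $S \subseteq \D(\H)$.

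For the arrangement-to-embedding direction, write $h_i = \{x : w_i \cdot x = \alpha_i\}$ with unit normal $w_i$, and pick a representative point $p_\sigma$ with $\sigma(p_\sigma) = \sigma$. My template is to set $\phi(a_i) = q_i + R w_i$ and $\phi(b_i) = q_i - R w_i$ with $q_i \in h_i$, and $\phi(c_\sigma) = p_\sigma$ after rescaling. The identity
\[ \|\phi(a_i) - \phi(c_\sigma)\|^2 - \|\phi(b_i) - \phi(c_\sigma)\|^2 \;=\; 4 R \bigl(\alpha_i - w_i \cdot p_\sigma\bigr) \]
will automatically give the correct $a_i$-closer-vs-$b_i$-closer dichotomy based on the sign of $\sigma_i$. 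The remaining job is to tune $R$, the base points $q_i$, the rescaling of the $p_\sigma$'s, and the threshold $t$ so that the three cliques (the $a_i$'s, the $b_i$'s, the $c_\sigma$'s) have pairwise distances at most $t$, while $\|\phi(a_i) - \phi(b_j)\| > t$ for all $i,j$.

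My concrete tuning plan is to first rescale the $p_\sigma$'s into a small ball around a chosen origin $q$, making the $c_\sigma$-clique constraint trivial; then take each $q_i$ to be the projection of $q$ onto $h_i$ and let $R$ be large enough that $\|\phi(a_i) - \phi(b_j)\|^2 \approx 4R^2$ dominates. Choosing $t^2$ strictly between the intra-cluster distance scale and $4R^2$ separates the cliques from the cross-pairs, while the identity above (with the threshold tuned appropriately) handles the $a_i$-to-$c_\sigma$ edges. The main obstacle I anticipate is that the ``large $R$'' tactic can degenerate when the unit normals $w_i$ lie near antipodal directions, since then $\|\phi(a_i)-\phi(b_j)\|$ can approach zero; handling this likely requires either a more careful choice of the $q_i$'s, a preliminary affine transformation of the arrangement to a common half-space, or a homogenization trick that still fits the result inside $k$ dimensions rather than buying an extra one. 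The delicate balance among $R$, the diameter of the $\{p_\sigma\}$ cluster, and $t$ is where I expect the proof to demand the most care.
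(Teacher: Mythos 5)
First, a caveat: the paper does not prove this statement at all --- it is imported directly from \cite{KM12} --- so there is no in-paper argument to compare against, and your proposal must be judged on its own terms. Your forward direction is fine: since $\{a_i,b_i\}\notin E$ forces $\phi(a_i)\neq\phi(b_i)$, the perpendicular bisector is well defined, and the mutual exclusivity of the edges $\{a_i,c_\sigma\}$ and $\{b_i,c_\sigma\}$ pins $\phi(c_\sigma)$ to the $\sigma_i$ side of $h_i$.

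The reverse direction has a genuine gap, and it is worse than the one you flag. With $\phi(a_i)=q_i+Rw_i$ and $\phi(b_i)=q_i-Rw_i$, for large $R$ you get $\|\phi(a_i)-\phi(a_j)\|^2\approx 2R^2(1-w_i\cdot w_j)$ and $\|\phi(a_i)-\phi(b_j)\|^2\approx 2R^2(1+w_i\cdot w_j)$, so the required inequalities $\|\phi(a_i)-\phi(a_j)\|\le t<\|\phi(a_i)-\phi(b_j)\|$ force $w_i\cdot w_j>0$ for every pair: the template breaks whenever two normals merely form an obtuse angle, not only when they are near-antipodal. None of your proposed repairs clearly closes this; in particular, an affine change of coordinates acts on the normals by a fixed linear map (up to normalization), which preserves antipodal pairs and cannot make an arbitrary family pairwise acute. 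The most telling symptom is that your construction never uses the hypothesis, built into the paper's definition of $k$-realizability, that $S$ contains $(+,\dots,+)$ and $(-,\dots,-)$. That hypothesis is essential (it forces $w_i\cdot(p^+-p^-)>0$ for all $i$, i.e., all oriented normals lie in a common open half-space), and without it the claimed direction is simply false. Concretely, in $\R^1$ take $h_1=\{x=0\}$ with positive side $x>0$ and $h_2=\{x=1\}$ with positive side $x<1$; the points $0.5,-1,2$ realize $S=\{(+,+),(-,+),(+,-)\}$. Yet $G_U(S)$ has no $1$-dimensional embedding: the $a$-clique and $b$-clique must occupy disjoint intervals of length at most $t$ with every $a$--$b$ distance exceeding $t$, say with the $a$'s on the left; then $c_{(-,+)}$ (adjacent to $a_2$ and $b_1$, not $a_1$) must satisfy $c_{(-,+)}>b_1-t>\max(a_1,a_2)$, hence $c_{(-,+)}>a_1+t$ and $c_{(-,+)}\le a_2+t$, giving $a_1<a_2$; the symmetric argument with $c_{(+,-)}$ gives $a_2<a_1$, a contradiction. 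So the arrangement-to-embedding direction needs a construction that genuinely exploits the all-plus and all-minus sign vectors; the one you propose cannot be tuned into it.
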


Our directed construction is very similar to $G_U(S)$.

\begin{defn}
For $S \subset \{-, +\}^n$, let $G_D(S) = (V,E)$ be the directed graph with vertices
$$V = \{a_1, a_2, \dots a_n\} \cup  \{b_1, b_2, \dots b_n\} \cup  \{c_\sigma, \sigma \in S\}$$
and edges
\begin{itemize}
	\item $(a_i, c_{\sigma}) \in E$ if and only if $\sigma_i = +$
	\item $(b_i, c_\sigma) \in E$ if and only if $\sigma_i = -$.
\end{itemize}
\end{defn}

\begin{thm}
$S$ is $k$-realizable if and only if $G_D(S)$ has a distance embedding of dimension at most $k$.
\end{thm}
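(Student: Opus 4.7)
The plan is a direct reduction in both directions, exploiting the standard fact that for a finite point set in $\R^k$, a halfspace can be realized by a sufficiently large Euclidean ball whose center is placed far along the normal direction.

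For the forward direction, suppose $\mathcal{H} = \{h_1, \ldots, h_n\}$ is an oriented $k$-hyperplane arrangement realizing $S$, with points $p_\sigma \in \R^k$ satisfying $\sigma(p_\sigma) = \sigma$. Write $h_i = \{x : w_i \cdot x = \beta_i\}$ with $\|w_i\| = 1$ and $h_i^+ = \{x : w_i \cdot x > \beta_i\}$. For a large parameter $M$, let $q_i^+$ and $q_i^-$ be the two points at distance $M$ from $h_i$ along $+w_i$ and $-w_i$ respectively. I would then set
\begin{equation*}
\phi_{in}(c_\sigma) = p_\sigma, \quad \phi_{out}(a_i) = q_i^+, \quad \phi_{out}(b_i) = q_i^-, \quad t = M,
\end{equation*}
and send the remaining unused vectors $\phi_{in}(a_i)$, $\phi_{in}(b_i)$, $\phi_{out}(c_\sigma)$ to a single ``parking'' location whose Euclidean distance from every other embedded point exceeds $M$. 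Expanding $\|q_i^\pm - p_\sigma\|^2 = M^2 \mp 2M(w_i \cdot p_\sigma - \beta_i) + \|p_\sigma - c_i\|^2$ for any $c_i \in h_i$, one checks that for $M$ sufficiently large (in terms of $\max_\sigma \|p_\sigma - c_i\|$ and the minimum nonzero margin $\min_{\sigma,i}|w_i \cdot p_\sigma - \beta_i|$), the inequality $\|q_i^+ - p_\sigma\| \leq M$ holds iff $w_i \cdot p_\sigma > \beta_i$, i.e., iff $\sigma_i = +$, and symmetrically for $q_i^-$ and $\sigma_i = -$. The parking choice guarantees that every non-edge with endpoint in $\{a_j, b_j\}$ or source in $\{c_\sigma\}$ has distance exceeding $t$, so no phantom edges appear.

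For the reverse direction, given a distance embedding $(\phi_{in}, \phi_{out}, t)$ of $G_D(S)$ into $\R^k$, set $p_\sigma = \phi_{in}(c_\sigma)$ and, for each $i$, take $h_i$ to be the perpendicular bisector of $\phi_{out}(a_i)$ and $\phi_{out}(b_i)$, oriented so that $h_i^+$ is the side containing $\phi_{out}(a_i)$. Since $S$ contains both $(+,\ldots,+)$ and $(-,\ldots,-)$, for each $i$ there exist $\sigma, \sigma' \in S$ with $\sigma_i = +$ and $\sigma'_i = -$; the four resulting edge/non-edge conditions force $\phi_{out}(a_i) \neq \phi_{out}(b_i)$, so $h_i$ is a genuine hyperplane in $\R^k$. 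Now if $\sigma_i = +$ we have $(a_i, c_\sigma) \in E$ and $(b_i, c_\sigma) \notin E$, giving $\|\phi_{out}(a_i) - p_\sigma\| \leq t < \|\phi_{out}(b_i) - p_\sigma\|$ and therefore $p_\sigma \in h_i^+$; the case $\sigma_i = -$ is symmetric. Hence $\sigma(p_\sigma) = \sigma$ for every $\sigma \in S$, so $S \subset \D(\mathcal{H})$.

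The only step that requires any real care, and which I expect to be the main (though routine) obstacle, is the forward direction: one must choose $M$ and the parking location so that (i) the ball-to-halfspace approximation is exact on every $p_\sigma$ and (ii) every non-edge among the non-interacting vertex pairs is respected. Both reduce to finite quantitative bounds depending only on the given arrangement, so they pose no conceptual difficulty; the reverse direction is essentially immediate once one recognizes the perpendicular bisector of two ball centers as the correct hyperplane.
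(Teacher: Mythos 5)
Your reverse direction (embedding $\Rightarrow$ realizable) is the same perpendicular-bisector argument as the paper's, with the added (and welcome) check that $\phi_{out}(a_i)\neq\phi_{out}(b_i)$ so the bisector is a genuine hyperplane. Your forward direction, however, is genuinely different: the paper does not build the embedding from the arrangement directly, but instead invokes \citet{KM12}'s equivalence between $k$-realizability and $k$-dimensional undirected distance embeddability of the auxiliary graph $G_U(S)$, and then transfers that undirected embedding to $G_D(S)$ by parking the unused vectors. Your route --- realizing each oriented halfspace $h_i^{\pm}$ as a ball of radius $M$ centered at $q_i^{\pm}=c_i\pm Mw_i$, and checking via $\|q_i^{+}-p_\sigma\|^2=M^2-2M(w_i\cdot p_\sigma-\beta_i)+\|c_i-p_\sigma\|^2$ that for large $M$ membership in the ball coincides with the sign of the margin --- is self-contained and avoids any reliance on Theorem~\ref{thm_km_dist}; the margins are strictly positive because sign vectors in $S$ have no zero entries, so a finite $M$ suffices. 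What the paper's route buys is brevity (the hard geometric work is delegated to the cited result); what yours buys is independence from that result and an explicit quantitative construction.

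One step as written would fail: you send \emph{all} of $\phi_{in}(a_i)$, $\phi_{in}(b_i)$, and $\phi_{out}(c_\sigma)$ to a \emph{single} parking point $P$. Then for every $\sigma$ and $j$ you get $\|\phi_{out}(c_\sigma)-\phi_{in}(a_j)\|=0\leq t$, which creates the phantom edges $(c_\sigma,a_j)$ and $(c_\sigma,b_j)$, none of which are in $G_D(S)$. (Your stated requirement that $P$ be far from ``every other embedded point'' cannot be satisfied by a single location, since the parked out-vectors and parked in-vectors would then need to be far from each other.) The fix is routine --- park the unused out-vectors and the unused in-vectors at two locations far from each other and from everything else, or give each its own distant spot, as the paper does --- but the single-location version is incorrect as stated.
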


\begin{proof}
\begin{enumerate}
	\item[$\Rightarrow$] Suppose $G_D(S)$ has a distance embedding $\phi$ with dimension $k$. Let $h_i$ be the hyperplane that is the perpendicular bisector of $\phi_{out}(a_i)$ and $\phi_{out}(b_i)$; orient it so that $\phi_{out}(a_i)$ lies on the positive side and $\phi_{out}(b_i)$ on the negative side. Letting $\H = \{h_1, h_2, \dots, h_n\}$, we claim that $\phi_{in}(c_{\sigma})$ has sign vector exactly $\sigma$ with respect to $\H$. Thus $S \subset \D(\H)$.
	
To prove our claim, consider any $\sigma \in S$, $1 \leq i \leq n$. $c_{\sigma}$ has an edge from exactly one of $a_i$ and $b_i$ and consequently $\phi_{in}(c_{\sigma})$ is closer to the corresponding $\phi_{out}(a_i)$ or $\phi_{out}(b_i)$. Thus $\phi_{in}(c_{\sigma})$ is on the $\sigma_i$ side of $h_i$. Combining this over all $i$, we see that $\phi_{in}(c_\sigma)$ has sign vector $\sigma$ as desired. 
	\item[$\Leftarrow$] Suppose $S$ is $k$-realizable. Then by Theorem \ref{thm_km_dist}, $G_U(S)$ has a $k$-dimensional undirected distance embedding $\phi$ (in the sense of Theorem~\ref{thm:sphericity}). We construct a $k$-dimensional distance embedding $\phi'$ of $G_D(S)$ from $\phi$ as follows. $\phi'_{out}(a_i) = \phi(a_i)$, $\phi'_{out}(b_i) = \phi(b_i)$, $\phi'_{in}(c_\sigma) = \phi(c_\sigma)$, and the remaining vectors are assigned so that they are each at distance $> 1$ from all the other vectors. The only possible edges in such an embedding are edges from $\{a_1, a_2, \dots a_n, b_1, b_2, \dots b_n\} \to \{c_\sigma: \sigma \in S\}$. Since the corresponding edges are in $G_U(S)$, it follows that $\phi'$ is a valid embedding of $G_D(S)$ as desired. 
\end{enumerate}

\end{proof}

Since the construction of $G_D(S)$ from $S$ takes polynomial time, the hardness of $k$-{\sc realizability} implies the following.
\begin{cor}
Computing $d_{dist}(G)$ for a directed graph $G$ is NP-hard.
\end{cor}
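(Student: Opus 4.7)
The plan is to assemble the reduction that has been fully set up in the preceding theorem. By that theorem, for any instance $S \subset \{-,+\}^n$ of $k$-{\sc realizability}, the directed graph $G_D(S)$ satisfies $d_{dist}(G_D(S)) \leq k$ if and only if $S$ is $k$-realizable. The construction $S \mapsto G_D(S)$ produces a graph on $2n + |S|$ vertices with at most $n|S|$ edges and can be carried out in time polynomial in $n + |S|$, simply by reading off the sign entries of the elements of $S$.

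Next, I would use this to give a Turing reduction from $k$-{\sc realizability} to the problem of computing $d_{dist}$. Suppose one had an algorithm $\mathcal{A}$ that, on input a directed graph $G$, returns $d_{dist}(G)$. Then given an instance $S$ of $k$-{\sc realizability}, one builds $G_D(S)$ in polynomial time, runs $\mathcal{A}$ on it, and reports ``yes'' iff $\mathcal{A}(G_D(S)) \leq k$. By the preceding theorem, this procedure decides $k$-{\sc realizability}. Since $k$-{\sc realizability} is NP-hard for every $k > 1$ by the cited result of Kang and M\"uller, any polynomial-time $\mathcal{A}$ would yield a polynomial-time algorithm for an NP-hard problem; hence computing $d_{dist}(G)$ is NP-hard.

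There is no real obstacle here: the combinatorial heavy lifting was done in the preceding theorem, which is why this is stated as a corollary rather than a theorem. The only thing to be slightly careful about is that the previous theorem is phrased as a biconditional on the existence of a distance embedding of a given dimension, while the corollary is about the \emph{computation} of the minimum such dimension; this gap is closed by the standard observation that a subroutine returning the exact value of $d_{dist}(G)$ can be compared against $k$ in constant time, thereby solving the decision version.
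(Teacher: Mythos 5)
Your argument is correct and is exactly the route the paper takes: the paper dispatches this corollary with the single remark that the construction $S \mapsto G_D(S)$ is polynomial-time, so the hardness of $k$-{\sc realizability} transfers via the biconditional established in the preceding theorem. Your additional care in phrasing the reduction as comparing the output of a hypothetical exact algorithm against $k$ is a fine (if standard) way to bridge the decision/computation gap, and introduces no divergence from the paper's reasoning.
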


\subsection{Similarity embeddings}

This section is almost identical to the previous section. The only difference is that our constructions $G_D(S)$ and $G_U(S)$ are different to account for the fact that we are dealing with similarity embeddings instead of distance embeddings. 

\begin{defn}
(\cite{KM12}) For $S \subset \{-, +\}^n$, $G_U(S) = (V,E)$ is the undirected graph defined as follows. $$V = \{a_1, a_2, \dots a_n\} \cup  \{c_\sigma: \sigma \in S \}.$$ $E$ is defined by
\begin{itemize}
	\item $\{a_i, a_j\} \in E$ for all $i \neq j$
	\item $\{a_i, c_{\sigma}\} \in E$ if and only if $\sigma_i = +$.
\end{itemize}
\end{defn}

\begin{thm}\label{thm_km_sim}
(\cite{KM12}) $S$ is $k$-realizable if and only if $G_U(S)$ has a $k$-dimensional undirected similarity embedding (in the sense of Theorem~\ref{thm:sphericity-dot-product}). 
\end{thm}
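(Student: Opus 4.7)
The plan is to follow the template of Theorem~\ref{thm_km_dist} (the distance case), adapted to similarity embeddings. Both implications rest on the correspondence between a similarity-embedding vector $\phi(a_i)$ and the oriented affine hyperplane $\{y \in \R^k : \phi(a_i) \cdot y = 1\}$ that bounds the halfspace $\phi(a_i) \cdot y \geq 1$.

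For the $(\Leftarrow)$ direction, given a $k$-dimensional similarity embedding $\phi$ of $G_U(S)$ with threshold $1$, I would define oriented hyperplanes $h_i = \{y : \phi(a_i) \cdot y = 1\}$ with positive side $\{y : \phi(a_i) \cdot y > 1\}$. The embedding condition then forces $\phi(c_\sigma)$ to have sign vector exactly $\sigma$ (after an arbitrarily small perturbation off any hyperplane it happens to lie on). The vector $(-,\ldots,-)$ is realized by the origin, while $(+,\ldots,+)$ is realized by a scalar multiple $\lambda \phi(a_j)$ for any $j$: the clique condition $\phi(a_i) \cdot \phi(a_j) \geq 1$ for $i \neq j$ takes care of those coordinates, and choosing $\lambda \geq 1/\|\phi(a_j)\|^2$ handles the coordinate $i = j$. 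Hence $S \subseteq \D(\H)$, giving a $k$-realization.

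For the $(\Rightarrow)$ direction, given a $k$-realization with hyperplanes $h_i = \{y : w_i \cdot y = b_i\}$ and sign points $p_\sigma$, I would first translate the origin into the all-negative cell (nonempty since $(-,\ldots,-) \in S$), forcing every $b_i > 0$. Setting $\phi(a_i) = w_i/b_i$ makes $\{y : \phi(a_i) \cdot y \geq 1\}$ exactly the positive side of $h_i$, so taking $\phi(c_\sigma) = p_\sigma$ (perturbed into the interior of its cell if necessary) gives $\phi(a_i) \cdot \phi(c_\sigma) \geq 1$ iff $\sigma_i = +$, handling all $a$-$c$ edges.

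The hard part will be enforcing the clique on $\{a_1, \ldots, a_n\}$, which requires $\phi(a_i) \cdot \phi(a_j) = (w_i \cdot w_j)/(b_i b_j) \geq 1$ for every $i \neq j$ and is not automatic. My plan is to extract a point $q$ from the all-plus cell (nonempty since $(+,\ldots,+) \in S$), so that $(w_i/b_i) \cdot q > 1$ for every $i$, and then apply the invertible linear change of basis $y \mapsto B^{-1/2} y$ for the positive definite matrix $B = \gamma\, q q^T + \epsilon I$, with $\gamma$ large and $\epsilon > 0$ small. Such a change of basis preserves the combinatorial description of the arrangement and leaves every $a$-$c$ dot product $\phi(a_i) \cdot \phi(c_\sigma)$ unchanged, while the new pairwise dual-point dot product becomes
$$ (w_i/b_i)^T B (w_j/b_j) \;=\; \gamma\, \bigl((w_i/b_i) \cdot q\bigr)\bigl((w_j/b_j) \cdot q\bigr) + \epsilon\, (w_i/b_i) \cdot (w_j/b_j), $$
whose first term exceeds $\gamma > 1$ with slack sufficient to absorb the possibly-negative second term. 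This change-of-basis step is the delicate heart of the argument; the remainder is routine bookkeeping of sign and edge conditions.
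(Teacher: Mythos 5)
First, note that the paper does not prove this statement at all: it is quoted verbatim from \citet{KM12} as an external ingredient for the NP-hardness reduction, so there is no in-paper proof to compare against. Judged on its own, your duality between the vector $\phi(a_i)$ and the oriented affine hyperplane $\{y : \phi(a_i)\cdot y = 1\}$ is the right idea, and your treatment of the $a$--$c$ incidences, of the all-negative cell via the origin, and of the clique on $\{a_1,\dots,a_n\}$ via a change of basis is essentially sound (modulo spelling out that the perturbation off a hyperplane can be done simultaneously for all tight constraints, which works because the $\phi(a_i)$ have pairwise dot products $\geq 1$ and hence $0$ is not in their convex hull).

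There is, however, a genuine gap in your $(\Rightarrow)$ direction: you never verify the non-edges among the $c_\sigma$'s. In the similarity version of $G_U(S)$ the set $\{c_\sigma : \sigma \in S\}$ is an \emph{independent} set (unlike the distance version, where it is a clique), so the embedding must also satisfy $\phi(c_\sigma)\cdot\phi(c_\pi) < 1$ for all $\sigma \neq \pi$. Nothing in your construction enforces this, and your change of basis actively threatens it: the $c$-points transform by $B^{-1/2}$, and with $B = \gamma\,qq^T + \epsilon I$ the inverse $B^{-1} = \frac{1}{\epsilon}\bigl(I - \frac{\gamma\,qq^T}{\epsilon + \gamma\|q\|^2}\bigr)$ blows up as $\epsilon \to 0$, so $p_\sigma^T B^{-1} p_\pi$ can be made arbitrarily large and positive. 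Fortunately the repair is one line and uses a symmetry you already have available: the final substitution $\phi(a_i) \mapsto \lambda\,\phi(a_i)$, $\phi(c_\sigma) \mapsto \phi(c_\sigma)/\lambda$ leaves every $a$--$c$ dot product unchanged, only improves the clique condition for $\lambda \geq 1$, and for $\lambda$ large enough forces $\phi(c_\sigma)\cdot\phi(c_\pi) = p_\sigma^T B^{-1} p_\pi/\lambda^2 < 1$ (indeed $\|\phi(c_\sigma)\|$ can be driven below $1$, so Cauchy--Schwarz suffices). With that step added, and with the perturbation argument in the $(\Leftarrow)$ direction made explicit, your proof goes through.
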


\begin{defn}
 For $S \subset \{-, +\}^n$, $G_D(S) = (V,E)$ is the directed graph defined as follows. $$V = \{a_1, a_2, \dots a_n\} \cup  \{c_{\sigma}: \sigma \in S\}.$$ $$E = \{(a_i, c_\sigma): \sigma_i = +\}.$$
\end{defn}

\begin{thm}
$S$ is $k$-realizable if and only if $d_{sim}(G_D(S)) \leq k$. 
\end{thm}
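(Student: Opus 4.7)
The plan is to mimic the distance-embedding reduction from the previous subsection, swapping perpendicular bisectors for general affine hyperplanes induced by the similarity inequality. I will show each direction separately.

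For the ($\Leftarrow$) direction, suppose $G_D(S)$ has a similarity embedding $(\phi_L, \phi_R, t)$ in $\R^k$. The idea is to turn each vertex $a_i$ into an oriented hyperplane. Specifically, define
\[ h_i \;=\; \{x \in \R^k : \phi_L(a_i) \cdot x = t\}, \]
oriented so that $h_i^+ = \{x : \phi_L(a_i)\cdot x \geq t\}$ and $h_i^- = \{x : \phi_L(a_i)\cdot x < t\}$. Assuming each $\phi_L(a_i)$ is nonzero (which I will address below), $h_i$ is a genuine affine hyperplane. Now pick any $\sigma \in S$ and consider the sign vector of $\phi_R(c_\sigma)$ under $\H = \{h_1, \dots, h_n\}$. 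By definition of the similarity embedding, $\phi_L(a_i) \cdot \phi_R(c_\sigma) \geq t$ iff $(a_i, c_\sigma) \in E$, which holds iff $\sigma_i = +$. Hence $\sigma(\phi_R(c_\sigma)) = \sigma$, so $S \subseteq \D(\H)$ and $S$ is $k$-realizable.

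For the ($\Rightarrow$) direction, suppose $S$ is $k$-realizable. By Theorem~\ref{thm_km_sim}, $G_U(S)$ has an undirected similarity embedding $\phi : V(G_U(S)) \to \R^k$ with threshold $1$, so that $\{u,v\} \in E(G_U(S)) \iff \phi(u) \cdot \phi(v) \geq 1$. I now build a $k$-dimensional similarity embedding of $G_D(S)$ with the same threshold $t = 1$. Set
\[ \phi'_L(a_i) = \phi(a_i), \quad \phi'_R(c_\sigma) = \phi(c_\sigma), \quad \phi'_R(a_i) = 0, \quad \phi'_L(c_\sigma) = 0. \]
The only pairs $(u,v)$ that can conceivably satisfy $\phi'_L(u)\cdot \phi'_R(v) \geq 1$ are $(a_i, c_\sigma)$, since every other combination puts a zero vector on one side, giving dot product $0 < 1$. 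And for such a pair, $\phi'_L(a_i)\cdot \phi'_R(c_\sigma) = \phi(a_i) \cdot \phi(c_\sigma) \geq 1$ exactly when $\{a_i, c_\sigma\} \in E(G_U(S))$, which by construction is exactly when $\sigma_i = +$, i.e., when $(a_i, c_\sigma) \in E(G_D(S))$. Thus $\phi'$ is a valid $k$-dimensional similarity embedding of $G_D(S)$, giving $d_{sim}(G_D(S)) \leq k$.

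The one subtlety lies in the forward direction: the construction of $h_i$ requires $\phi_L(a_i) \neq 0$. I expect this to be the main obstacle but only a mild one: if $\phi_L(a_i) = 0$, then $\phi_L(a_i) \cdot \phi_R(c_\sigma) = 0$ is the same for all $\sigma$, which forces either $\sigma_i = +$ for all $\sigma \in S$ or $\sigma_i = -$ for all $\sigma \in S$ (depending on the sign of $t$). Because $S$ contains both $(+,\dots,+)$ and $(-,\dots,-)$, only one value of $\sigma_i$ is forced, which is fine: we can replace $\phi_L(a_i)$ by an arbitrarily small perturbation so that the corresponding strict inequalities remain strict, making $\phi_L(a_i)$ nonzero without altering any sign. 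Combined with NP-hardness of $k$-\textsc{realizability} and the polynomial-time constructibility of $G_D(S)$ from $S$, this yields the analogous corollary that computing $d_{sim}(G)$ is NP-hard.
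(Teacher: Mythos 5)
Your proof is correct and follows essentially the same route as the paper: one direction reads off the oriented hyperplanes $h_i = \{x : \phi_L(a_i)\cdot x = t\}$ from the embedding, and the other lifts the undirected similarity embedding of $G_U(S)$ by sending the unused in/out vectors to $0$ so that no spurious edges appear. Your worry about $\phi_L(a_i)=0$ is actually vacuous: since $S$ contains both the all-$+$ and all-$-$ vectors, a zero $\phi_L(a_i)$ would make the constant dot product $0$ simultaneously $\geq t$ and $< t$, contradicting validity of the embedding, so no perturbation is needed.
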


\begin{proof}
\begin{enumerate}
	\item[$\Rightarrow$] Suppose $G_D(S)$ has a similarity embedding $\phi$ with dimension $k$. Let $$h_i = \{v: \langle v, \phi_{L}(a_i) \rangle = t\},$$ and let $\H = \{h_1, h_2, \dots, h_n\}$. We claim that $\phi_{R}(c_{\sigma})$ has sign vector exactly $\sigma$ with respect to $\H$. This clearly suffices as it shows that $\sigma \in \D(\H)$ for all $\sigma \in S$. The proof is analogous to case presented in Theorem \ref{thm_km_dist}.
	\item[$\Leftarrow$] Suppose $S$ is $k$-realizable. Then by Theorem \ref{thm_km_dist}, $G_U(S)$ has a $k$ dimensional undirected similarity embedding $\phi$ with threshold $t = 1$. Our construction of a similarity embedding for $G_D(S)$ is identical to our construction in Theorem \ref{thm_km_dist} with the only difference being that our remaining points are mapped to $0$ instead of infinity. Since our threshold $t > 0$, this means that none of the non edges are embedded, and this completes the proof. 
\end{enumerate}

\end{proof}

\begin{cor}
Computing $d_{sim}(G)$ for a directed graph $G$ is NP-hard.
\end{cor}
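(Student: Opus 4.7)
The plan is to derive this corollary as an immediate consequence of the theorem stated just before it, in exact parallel to the NP-hardness result for $d_{dist}$ obtained a few lines earlier. The strategy is a polynomial-time Karp reduction from $k$-\textsc{realizability}, which the excerpt already quotes as NP-hard for every $k > 1$ by the result of \citet{KM12}.

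First, I would check that the construction $S \mapsto G_D(S)$ from the preceding definition is polynomial-time computable: the vertex set has size $n + |S|$, and the edge set is specified by examining each pair $(a_i, c_\sigma)$ and including the edge exactly when $\sigma_i = +$, so the whole graph is built in time $O(n|S|)$ from the input $S \subset \{-,+\}^n$. Crucially, the parameter $k$ is preserved on the nose: the reduction maps an instance $(S, k)$ of $k$-\textsc{realizability} to the instance $(G_D(S), k)$ of the decision problem ``is $d_{sim}(G) \leq k$?''

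Second, I would invoke the theorem immediately above, which asserts that $S$ is $k$-realizable if and only if $d_{sim}(G_D(S)) \leq k$. Consequently, any algorithm that computes $d_{sim}(G)$ on input a directed graph $G$ (or even one that decides whether $d_{sim}(G) \leq k$ for a given $k$) yields, via this reduction, an algorithm that decides $k$-\textsc{realizability}. Since $k$-\textsc{realizability} is NP-hard for every fixed $k > 1$, so is computing $d_{sim}$.

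There is no real obstacle here: all the technical content was already absorbed into the preceding theorem. The only thing to be careful about is to state the conclusion in the form that matches the cited hardness result, namely hardness for some fixed $k > 1$ rather than for the trivial case $k = 1$, and to note explicitly that the reduction is polynomial. Both points are checked in the two paragraphs above, so the corollary follows in one line.
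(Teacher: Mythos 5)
Your proposal is correct and matches the paper's reasoning exactly: the corollary is an immediate consequence of the preceding equivalence theorem together with the polynomial-time computability of $S \mapsto G_D(S)$ and the NP-hardness of $k$-\textsc{realizability} from \citet{KM12}. Your additional care about fixing $k > 1$ and verifying the reduction's running time is appropriate but adds nothing beyond what the paper implicitly relies on.
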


\end{document}